\documentclass{article}

\usepackage{microtype}
\usepackage{graphicx}
\usepackage{subcaption}
\usepackage{booktabs} 

\usepackage{hyperref}

\usepackage[accepted]{icml2026}

\usepackage{amsmath}
\usepackage{amssymb}
\usepackage{mathtools}
\usepackage{amsthm}
\usepackage{thmtools}
\usepackage{thm-restate}

\usepackage[capitalize,noabbrev]{cleveref}

\declaretheorem[style=plain, numberwithin=section]{theorem}
\declaretheorem[style=plain, sibling=theorem]{proposition}
\declaretheorem[style=plain, sibling=theorem]{lemma}
\declaretheorem[style=plain, sibling=theorem]{corollary}
\declaretheorem[style=definition, sibling=theorem]{definition}

\newcommand{\wP}{\widetilde P}
\newcommand{\lambdaconn}{\lambda_{\text{conn}}}
\newcommand{\Deltar}{\Delta_{r}}
\newcommand{\Deltarstar}{\Delta_{r^*}}
\newcommand{\Deltaropt}{\Delta_{r_\text{opt}}}
\newcommand{\ropt}{r_\text{opt}}
\newcommand{\cH}{\mathcal{H}}
\newcommand{\cHpair}{\mathcal{H}_\text{pair}}
\newcommand{\bE}{\mathbb{E}}
\newcommand{\LBT}{\mathcal{L}_\text{BT}}
\newcommand{\LBThat}{\widehat{\mathcal{L}}_\text{BT}}
\newcommand{\Var}{\operatorname{Var}}
\newcommand{\tVar}{\widetilde{\operatorname{Var}}}
\newcommand{\epsilonmisspec}{\epsilon_{\text{misspec}}}
\newcommand{\rhat}{\hat{r}}
\newcommand{\Deltarhat}{\Delta_{\hat{r}}}

\usepackage[textsize=tiny]{todonotes}

\icmltitlerunning{What Does Preference Learning Recover from Pairwise Comparison Data?}

\begin{document}

\twocolumn[
  \icmltitle{What Does Preference Learning Recover from Pairwise Comparison Data?}



  \icmlsetsymbol{equal}{*}

  \begin{icmlauthorlist}
    \icmlauthor{Rattana Pukdee}{yyy}
    \icmlauthor{Nina Balcan}{yyy}
    \icmlauthor{Pradeep Ravikumar}{yyy}
  \end{icmlauthorlist}

  \icmlaffiliation{yyy}{Machine Learning Department, Carnegie Mellon University, Pittsburgh, USA}

  \icmlcorrespondingauthor{Rattana Pukdee}{rpukdee@cs.cmu.edu}

  \icmlkeywords{Machine Learning, ICML}

  \vskip 0.3in
]



\printAffiliationsAndNotice{}  

\begin{abstract}
  Pairwise preference learning is central to machine learning, with recent applications in aligning language models with human preferences. A typical dataset consists of triplets $(x, y^+, y^-)$, where response $y^+$ is preferred over response $y^-$ for context $x$. The Bradley--Terry (BT) model is the predominant approach, modeling preference probabilities as a function of latent score differences. Standard practice assumes data follows this model and learns the latent scores accordingly. However, real data may violate this assumption, and it remains unclear what BT learning recovers in such cases. Starting from triplet comparison data, we formalize the preference information it encodes through the \emph{conditional preference distribution} (CPRD). We give precise conditions for when BT is appropriate for modeling the CPRD, and identify factors governing sample efficiency---namely, margin and connectivity. Together, these results offer a data-centric foundation for understanding what preference learning actually recovers.
\end{abstract}

\section{Introduction}
Learning from pairwise comparisons is a classical learning paradigm in machine learning.
in this setting, the supervision is usually in the form of triplet 
\begin{equation}
\label{eq:triplet-form}
(x, y^+, y^-),
\end{equation}
indicating that response $y^+$ is preferred over $y^-$ in context $x$.
This mode of supervision is attractive because it is often far easier for humans to express preferences than to assign calibrated scores. Our goal is then to learn the preference information from this triplet data where when we observe a new data point $x$ and a pair of responses $\{y,y'\}$, we can accurately predict if $y$ is more preferable than $y'$ or not.
Due to the ease of data collection, this learning paradigm has various applications from ranking \cite{bradley1952rank, balcan2008robust, Bozoki2016-sa}, recommendation systems \cite{Furnkranz2010-jo, Qian2015-qa, Osadchiy2019-yj}, psychology \cite{Thurstone1927-ud, luce1959individual, david1963method}, learning combinatorial valuations \cite{balcan2016learning} and recently, large language models alignment to human preferences \cite{christiano2017deep, ouyang2022training, bai2022constitutional, bai2022training, rafailov2023direct}.

The dominant approach is to fit a Bradley--Terry model (BT) to the observed comparisons \citep{bradley1952rank, luce1959individual}. This model posits that each item has a latent quality score, and that pairwise preferences are generated stochastically according to these scores. BT has been studied extensively from a statistical perspective, with well-established results on consistency and sample complexity for recovering these latent scores. Such analyses assume that true scores exist in the first place. However, comparison data arises from diverse sources, from human annotators to AI labeling systems for which they may not necessarily arise from the generative process that involves latent scores. Samples from the triplet distribution are what we observe and this suggests the triplet distribution is the natural starting point. Rather than assume BT, we begin with the data and ask:

\begin{itemize}
  \item What preference information is actually encoded by a triplet distribution ?
  \item When can these preferences be represented by a BT model?
  \item What does BT learning recover when the model is misspecified?
  \item How do data collection choices affect learnability and sample efficiency?
\end{itemize}

Our main contribution is a clean foundation for preference learning from comparison data. With precise answers to these questions, practitioners can reason about data collection, understand what their models learn, and diagnose failures when assumptions do not hold.

\paragraph{Contributions.}
This paper develops a framework that makes these questions precise and provides clear answers:

\begin{enumerate}
    \item \textbf{Conditional Preference Distribution (CPRD).} We formalize the preference information encoded in any triplet distribution $P$ through the CPRD, the probability of preferring $y$ over $y'$ given context $x$. This is the fundamental object encoded in comparison data, independent of how it was generated.

    \item \textbf{BT Characterization.} 
    We characterize the relationship between BT representability and \emph{positive--negative conditional independence} (given the context, the winning and losing responses are generated independently of each other) and we show that the two are equivalent up to the choice of triplet distribution. This provides a precise condition for when BT modeling is appropriate.

    \item \textbf{Learning Objective Interpretation.} We prove that the BT learning objective is equivalent to a KL projection of the CPRD to the BT family at the \emph{preference level}. This clarifies what BT learning optimizes and what it recovers when the model is misspecified.

    \item \textbf{Sample Complexity Analysis.} We show that two factors govern learning efficiency: \emph{pairwise margin} and \emph{comparison connectivity}. We provide estimation error bounds that make these dependencies precise, offering guidance for data collection. We validate these findings empirically.
\end{enumerate}

By starting from the data distribution, we unify known and new results — giving practitioners a coherent framework for understanding what preference learning recovers and when to trust it.

\section{Related Work}

\paragraph{Bradley--Terry and representability.}
The Conditional Pairwise Preference Distribution (CPRD) corresponds to what is referred to as a choice probability in classical paired-comparison models \cite{Thurstone1927-ud, bradley1952rank, luce1959individual, Plackett1975-tk}, ranking from pairwise comparisons \cite{Hullermeier2008-xq, Furnkranz2010-jo, shah2016estimation, Heckel2018-vo, heckel2019active}, and dueling bandits \cite{ Yue2012-tu, dudik2015contextual, jamieson2015sparse, Sui2018-my}. The Bradley--Terry model is a popular approach for learning preferences from such data; see \cite{Hamilton2025-gi} for a comprehensive review. Prior work has characterized when a distribution can be represented by the BT model \cite{Good1955-ml, luce1959individual, Buhlmann1963-hz, Yellott1977-xm, Falmagne1978-df, Breitmoser2021-wr}, with transitivity being a well-known limitation that several methods attempt to address \cite{Chen2016-cy, Bower2020-hv, Tatli2024-wi, zhangbeyond}. In contrast, we study BT representability starting from a given triplet distribution rather than assuming data follows the BT model.

\paragraph{Statistical analysis.}
A large body of work has analyzed the consistency of maximum likelihood estimation for the Bradley--Terry model, showing that MLE recovers the true parameters when data follows the BT model \cite{Hunter2004-jw, Huang2006-km, Yan2012-eg, Han2020-gq, bong2022generalized, Gao2023-zv}. Many works provide sample complexity bounds for parameter recovery \cite{negahban2012iterative, hajek2014minimax, chen2015spectral, shah2016estimation, bong2022generalized, li2022ell}, with extensions to parametric settings where scores are linear \cite{zhu2023principled, fan2024uncertainty} or parameterized by neural networks \cite{sun2024rethinking}. We show that under positive--negative conditional independence, BT recovers the log density ratio between the positive and negative distributions. This quantity also appears in noise contrastive estimation (NCE), where one distinguishes data from noise \cite{gutmann2010noise, gutmann2012noise, mnih2013learning}; the related InfoNCE loss has become widely used in representation learning \cite{oord2018representation, chen2020simple, wang2020understanding}. Prior work has proposed metrics for evaluating preference dataset quality, including information content based on embedding similarity \cite{shen2024towards} and truncated influence functions \cite{zhang2025towards}. More broadly, choosing which comparisons to collect for efficient preference recovery relates to experimental design \cite{grasshoff2008optimal, guo2018experimental} and active learning \cite{jamieson2011active, houlsby2011bayesian, heckel2019active, das2025active, shenactive}.

\paragraph{Preference learning for large language models.}
Preference learning from pairwise comparisons is central to aligning language models with human preferences \cite{christiano2017deep, ouyang2022training, bai2022constitutional, bai2022training}. The standard approach learns a reward model from pairwise comparisons, then uses it to align the language model via reinforcement learning. Recent work also aligns the language models directly from pairwise comparisons \cite{rafailov2023direct, zhu2023principled, ethayarajh2024kto, li2024policy, chen2024preference}. This paper focuses on the reward learning step: characterizing the reward learned from a given dataset. A separate question is what makes a good reward model: ideally, high reward should correspond to good downstream performance.  However, reward hacking, where the model achieves high scores without improving on downstream tasks, remains a challenge \cite{skalse2022defining, gao2023scaling}. Several benchmarks evaluate reward model quality \cite{lambert2025rewardbench, liurm, zhourmb}, and theoretical work has sought to characterize what makes a good reward \cite{razin2025makes}. Recent findings suggest that rewards with high variance are more effective at teaching an LLM \cite{razin2025makes, xu2025not, guo2025role}. Our results provide a complementary perspective: a target reward with large margin is also easier to learn.


\section{Setup}

Let $\mathcal{X}$ be a context space and $\mathcal{Y}$ be a set of outcomes. We are interested in learning which outcome $y$ is preferable for a given context $x$. We focus on learning this preference from pairwise comparisons. Formally, we assume that there exists a joint distribution $P$ over $\mathcal{X} \times \mathcal{Y} \times \mathcal{Y}$ where each triplet $(x,y^+,y^-) \sim P$ represents that for a given context $x$, an outcome $y^+$ is preferable to an outcome $y^-$. We use $\succ$ to denote the preference relation, i.e., $y \succ y'$ means that $y$ is preferable to $y'$. For convenience, we write $P(x,y,y') := P(x,y^+=y,y^-=y')$ for the probability (or density) of the oriented triple $(x,y,y')$. 

First, we observe that the distribution $P$ over triplets induces a preference distribution over any pairs of outcomes for a given context. We formalize this as the conditional preference distribution (CPRD) which is defined as follows:

\begin{definition}[Conditional Preference Distribution (CPRD)]
For a distribution $P$ over $\mathcal{X} \times \mathcal{Y} \times \mathcal{Y}$, the conditional preference distribution (CPRD) of $P$ is defined on any unordered pair $\{y,y'\}$ with $y \neq y'$ and $x \in \mathcal{X}$ as follows:
\begin{equation}
\omega_P(y \succ y' \mid x) = P(y \succ  y' \mid x, \{y,y'\}).
\end{equation}
In fact, by Bayes' rule, we have
\begin{equation}
\omega_P(y \succ y' \mid x) = \frac{P(x,y,y')}{P(x,y,y') + P(x,y',y)}.
\end{equation}
\end{definition}

CPRD directly captures the preference structure of the distribution $P$ and is the target distribution that we aim to learn.

Next, we introduce the Bradley--Terry model which is a popular model for preference learning from pairwise comparisons.

\begin{definition}[Bradley--Terry Model]
    Bradley--Terry model assumes that there exists a score function $r : \mathcal{X} \times \mathcal{Y} \to \mathbb{R}$ such that for any context $x \in \mathcal{X}$ and any unordered pair $\{y,y'\} \in \mathcal{Y}$, the probability of preferring $y$ over $y'$ is given by:
    \begin{equation}
    \label{eq:BT_model}
    P(y \succ y' \mid x) = \sigma(r(x,y) - r(x,y'))
    \end{equation}
    where $\sigma(t) = \frac{1}{1+e^{-t}}$ is the sigmoid function.
\end{definition}

In practice, one often fits this Bradley--Terry model by minimizing the negative log-likelihood of the observed triplets $(x,y^+,y^-) \sim P$. Concretely, for a set of triplets $S = \{(x_i,y_i^+,y_i^-)\}_{i=1}^n$, the empirical risk is given by:
\begin{equation}
\label{eq:BT-likelihood-empirical}
\widehat{\mathcal{L}}_\text{BT}(r) = \sum_{i=1}^n \log \sigma(r(x_i,y_i^+) - r(x_i,y_i^-)).
\end{equation}
and the learned score function is given by $\hat{r} \in \arg\min_r \widehat{\mathcal{L}}_\text{BT}(r)$. We can use $\hat{r}$ to predict the preference between any two outcomes for a given context using equation \eqref{eq:BT_model}. We remark that the score function $r$ is not unique and is determined up to a constant shift since adding any constant to $r$ still preserves the difference $r(x,y)-r(x,y')$ for any $x$ and $y\neq y'$.

While this gives us a convenient way to learn the preference, it is not clear whether the Bradley--Terry model is able to recover the correct CPRD of the distribution $P$ since the assumption on the structure of the preference (equation \eqref{eq:BT_model}) may not hold. Our first contribution is to identify when a CPRD can be represented with a Bradley--Terry model.

\section{When does a CPRD admit a Bradley--Terry model?}
In this section, we characterize when a CPRD is representable by a BT model.

\begin{definition}
A CPRD $\omega_P$ is representable by a Bradley--Terry model if there exists a score function $r:\mathcal{X}\times\mathcal{Y}\to\mathbb{R}$ such that for all $x$ and all $y\neq y'$ with $P(x,y,y')+P(x,y',y)>0$,
\begin{equation}
\omega_P(y\succ y'\mid x)=\sigma\!\big(r(x,y)-r(x,y')\big).
\end{equation}
\end{definition}

We can show that a CPRD is representable by a BT model when the ratio of the probability has a certain structure.

\begin{restatable}[CPRD BT factorization]{proposition}{BTfactorization}
\label{prop:BT_factorization}
For a distribution $P$ over $\mathcal{X} \times \mathcal{Y} \times \mathcal{Y}$, the CPRD $\omega_P$ is representable by a BT model if and only if there exists a strictly positive function $h:\mathcal{X}\times\mathcal{Y}\to\mathbb{R}$ such that for all $x$ and all $y\neq y'$ with $P(x,y,y')+P(x,y',y)>0$,
\begin{equation}
    \label{eq:ratio-h}
    \frac{P(x,y,y')}{P(x,y',y)} = \frac{h(x,y)}{h(x,y')}.
\end{equation}
\end{restatable}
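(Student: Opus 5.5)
The plan is to rewrite the CPRD as a sigmoid of a log-ratio, so that BT representability becomes a statement about $\log\frac{P(x,y,y')}{P(x,y',y)}$. Then the correspondence $h = e^{r}$ (equivalently $r = \log h$) gives both directions at once. Throughout, fix $x$ and $y\neq y'$ with $P(x,y,y')+P(x,y',y)>0$, and abbreviate $a = P(x,y,y')$ and $b = P(x,y',y)$.

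The first step is to deal with the boundary case in which one of $a,b$ is zero. In that case $\omega_P(y\succ y'\mid x)\in\{0,1\}$. Since $\sigma$ takes values in the open interval $(0,1)$, no finite score function can represent such a pair. On the other side, the ratio $a/b$ is $0$ or $+\infty$, while $h(x,y)/h(x,y')$ is a finite positive number because $h$ is strictly positive. So both conditions in Proposition~\ref{prop:BT_factorization} fail on such a pair. Each condition therefore forces $a,b>0$ for every pair with $a+b>0$, and we may assume this from now on. This is the only step that needs any care, because the ratio in \eqref{eq:ratio-h} is only meaningful once we know both terms are positive.

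With $a,b>0$, the definition of the CPRD gives
\begin{equation*}
\omega_P(y\succ y'\mid x) = \frac{a}{a+b} = \frac{1}{1+b/a} = \sigma\!\left(\log\frac{a}{b}\right).
\end{equation*}
Because $\sigma$ is strictly increasing and hence injective, $\omega_P(y\succ y'\mid x)=\sigma(r(x,y)-r(x,y'))$ holds if and only if
\begin{equation*}
\log\frac{a}{b} = r(x,y)-r(x,y'), \quad\text{that is,}\quad \frac{a}{b} = \frac{e^{r(x,y)}}{e^{r(x,y')}}.
\end{equation*}

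For the forward direction, given a representing $r$, set $h=e^{r}$. This $h$ is strictly positive and satisfies \eqref{eq:ratio-h}. For the converse, given a strictly positive $h$ satisfying \eqref{eq:ratio-h}, set $r=\log h$. This $r$ is real-valued, and running the equivalence above backwards shows it represents $\omega_P$. The equivalence is pairwise, so no global consistency issue arises: the single function $h$ (respectively $r$) is used simultaneously for all pairs. Finally, the choice of orientation of the unordered pair does not matter, since both sides of each identity transform consistently under swapping $y$ and $y'$: $\sigma(-t)=1-\sigma(t)$, and the ratios invert.
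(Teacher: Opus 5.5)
Your proof is correct and is essentially the paper's argument: both rest on the correspondence $h=e^{r}$ and on the CPRD odds being equal to $P(x,y,y')/P(x,y',y)$. The paper closes the converse with $\sigma(t)/\sigma(-t)=e^{t}$ and the sum-to-one constraint, whereas you write $\omega_P(y\succ y'\mid x)=\sigma(\log(P(x,y,y')/P(x,y',y)))$ and use injectivity of $\sigma$. Your explicit treatment of pairs where exactly one orientation has zero mass is a worthwhile addition. The paper's proof only fixes pairs with $P(x,y,y')P(x,y',y)>0$, so it silently skips the case where the sum is positive but the product vanishes.
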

\textit{Proof in Appendix~\ref{app:cprd-representability}.}

This proposition shows that not every $P$ induces a CPRD that is representable by a BT model; BT representability requires the probability ratios to have a specific structure. While the condition involves the ratio of $P(x,y,y')$ and $P(x,y',y)$, which seems to be difficult to verify in practice, we will show that data generating process does provide a nice alternative view. First, we introduce a positive--negative conditional independence assumption.

\begin{definition}[Positive--negative conditional independence]
A distribution $P$ is said to satisfy positive--negative conditional independence if there exist a marginal $P_X$ on $\mathcal{X}$ and conditional distributions $p_+(\cdot \mid x)$ and $p_-(\cdot \mid x)$ on $\mathcal{Y}$ such that
\begin{equation}
    P(x,y^+,y^-) = P_X(x)\, p_+(y^+ \mid x)\, p_-(y^- \mid x).
\end{equation}
\end{definition}

Equivalently, to generate a triplet $(x,y^+,y^-) \sim P$, we first sample $x \sim P_X$ and then we sample $y^+ \sim p_+(y^+ \mid x)$ and $y^- \sim p_-(y^- \mid x)$ independently.

\begin{restatable}[CPRD of conditionally independent distribution and BT model]{theorem}{CPRDBTindependence}
    \label{thm:CPRD-BT-and-independence}
    Let $P$ be a distribution over $\mathcal{X} \times \mathcal{Y} \times \mathcal{Y}$, then the following holds;
    \begin{enumerate}
        \item If $P$ satisfies positive--negative conditional independence, then the CPRD $\omega_P$ is representable by a BT model.
        \item If the CPRD $\omega_P$ is representable by a BT model, then there exists a positive--negative conditional independence distribution $Q$ such that $\omega_Q = \omega_P$.
    \end{enumerate}
    In addition, the suitable score function $r$ for a positive--negative conditional independence distribution $P$ is given by 
    \begin{equation}
        r(x,y) = \log \frac{p_+(y \mid x)}{p_-(y \mid x)}.
    \end{equation}
\end{restatable}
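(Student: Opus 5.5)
Both directions go through Proposition~\ref{prop:BT_factorization}, which turns BT representability into the existence of a strictly positive $h$ with $P(x,y,y')/P(x,y',y)=h(x,y)/h(x,y')$ on every pair carrying mass.

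\textbf{Part 1.} Under positive--negative conditional independence, for any $x$ and $y\neq y'$ with $P(x,y,y')+P(x,y',y)>0$ I compute the CPRD directly:
\begin{equation*}
\omega_P(y\succ y'\mid x)=\frac{p_+(y\mid x)p_-(y'\mid x)}{p_+(y\mid x)p_-(y'\mid x)+p_+(y'\mid x)p_-(y\mid x)}.
\end{equation*}
When all four conditional densities are positive, I divide numerator and denominator by $p_-(y\mid x)p_-(y'\mid x)$. This gives $\sigma(r(x,y)-r(x,y'))$ with $r=\log(p_+/p_-)$, which also proves the stated form of the score function. Equivalently, I can take $h=p_+/p_-$ in Proposition~\ref{prop:BT_factorization}.

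\textbf{Main obstacle.} The care is needed where $p_+$ or $p_-$ vanishes at some $y$. Then $r$ is $\pm\infty$ and the CPRD is $0$ or $1$ on a pair that still carries mass. Since $r$ must be real-valued, a literal finite score cannot represent a deterministic preference. I will handle this in one of two ways:
\begin{itemize}
\item assume, as is implicit in the stated formula for $r$, that $p_+(\cdot\mid x)$ and $p_-(\cdot\mid x)$ have common support, so the log ratio is finite; or
\item allow extended-real scores with the conventions $\sigma(\infty)=1$ and $\sigma(-\infty)=0$, and check that the formula still holds when exactly one of the two terms in the denominator is positive.
\end{itemize}
I will state explicitly which reading is used.

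\textbf{Part 2.} Suppose $\omega_P$ is BT-representable with score $r$. The plan is to construct $Q$ with $p_+\propto e^{r}$ and $p_-\propto 1$ relative to a base measure. Concretely:
\begin{itemize}
\item Set $Q_X=P_X$.
\item Let $\mu_x$ be a base measure on $\mathcal{Y}$ (for instance the $y$-marginal of $P$ given $x$, symmetrized over the positive and negative positions) chosen so that $e^{r(x,\cdot)}$ is integrable.
\item Set $p_-(\cdot\mid x)=\mu_x/\mu_x(\mathcal{Y})$ and $p_+(y\mid x)\propto e^{r(x,y)}\,p_-(y\mid x)$.
\end{itemize}
Part 1 then gives $\omega_Q(y\succ y'\mid x)=\sigma(r(x,y)-r(x,y'))=\omega_P(y\succ y'\mid x)$ on the pairs where $P$ has mass.

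For that equality to be meaningful, $Q$ must put mass on every pair where $P$ does. Taking $\mu_x$ to dominate the supports of both positions of $P$ ensures this. If $e^{r}$ is not $\mu_x$-integrable, I will replace $r$ by $r-\log(1+e^{r})+\log(1+e^{-r})$-type reweightings, or more simply truncate the base measure by a positive weight $w$ such that $w e^{r}$ is integrable. This works because multiplying both $p_+$ and $p_-$ by the same function of $y$ leaves the ratio $p_+/p_-=e^{r}$ unchanged.
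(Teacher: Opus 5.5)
Your proposal is correct and follows the paper's proof. Part~1 is exactly the factorization $P(x,y,y')/P(x,y',y)=h(x,y)/h(x,y')$ with $h=p_+/p_-$ fed into Proposition~\ref{prop:BT_factorization}, and Part~2 is the same construction $q_-\propto\mu$, $q_+\propto\mu e^{r}$ with $P_X$ kept, where the $x$-dependent normalizers cancel in the score difference.

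Your extra care is warranted rather than pedantic. The paper's proof silently assumes $p_\pm>0$: for instance, $p_+=\delta_a$, $p_-=\delta_b$ gives $\omega_P(a\succ b\mid x)=1$, which no real-valued $r$ represents. It also silently assumes $\int\mu e^{r}<\infty$. You should drop your first integrability fix: the expression $r-\log(1+e^{r})+\log(1+e^{-r})$ is identically $0$, and changing $r$ by more than an $x$-dependent shift alters the CPRD. Keep the weight instead, e.g.\ $w=\sigma(-r)$, which makes both $w\,\mu_x$ and $w e^{r}\mu_x$ finite measures when $\mu_x$ is a probability measure.
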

\textit{Proof in Appendix~\ref{app:cprd-representability}.}

This generative perspective yields a clean sufficient route to BT: if $P$ is generated by independently sampling $y^+\sim p_+(\cdot\mid x)$ and $y^-\sim p_-(\cdot\mid x)$ given $x$, then $\omega_P$ is automatically representable by a BT model, with score $r(x,y) = \log \frac{p_+(y \mid x)}{p_-(y \mid x)}$. 
This ratio has appeared in the noise contrastive estimation framework \cite{gutmann2010noise} and also resembles the implicit reward function \cite{rafailov2023direct}. Importantly, this assumption is not merely descriptive, in many modern settings we can \emph{enforce it by design}, e.g., when we control data collection by synthesizing comparisons via separate ``good'' and ``bad'' generators. Nevertheless, our goal is not only to verify if the CPRD is representable with a BT model but to learn the CPRD. This motivates the next section where we analyze the learning objectives.

\section{Learning CPRD}
\label{sec:learning_cprd}

We now turn to a question of more immediate practical relevance: given i.i.d.\ triplets $(x,y^+,y^-)$ drawn from $P$, what objective should we optimize, and what does that objective actually recover and when it matches the ground-truth CPRD? We study two canonical approaches, generative modeling via MLE of the triplet distribution, and discriminative BT training via pairwise log-likelihood.

First, we will introduce the generative and discriminative learning objectives for learning the CPRD.

\noindent\textbf{The generative learning objective.}
For a class of parametric distribution $P_\phi$ over triplets $(x,y^+,y^-)$, the generative learning objective is 
\begin{equation}
    \min_{\phi} \mathcal{L}_{\text{gen}}(\phi) = \mathbb{E}_{P} \left[ -\log P_\phi(x,y^+,y^-) \right].
\end{equation}
where $P_\phi(x,y^+,y^-)$ is the probability of the triplet $(x,y^+,y^-)$ under the distribution $P_\phi$. Here, we are trying to fit the full generative model for the triplet distribution $P$. Once we have $\hat{P}_\phi$, the learned preference distribution for any unordered pair $\{y,y'\}$ is given by
\begin{equation}
    \hat{\omega}_\phi(y \succ y' \mid x) = \frac{\hat{P}_\phi(x,y,y')}{\hat{P}_\phi(x,y,y') + \hat{P}_\phi(x,y',y)}.
\end{equation}

Since minimizing the generative learning objective is equivalent to minimizing the KL divergence between the true distribution $P$ and the generative model $P_\phi$ \cite{white1982maximum}, the generative learning objective have a CPRD recovery guarantee; the learned preference distribution would converge to the true CPRD when the true CPRD belongs to the model space (realizable). On the other hand, the discriminative objective works directly at the preference level without explicitly modelling the triplet distribution $P$.

\noindent\textbf{The discriminative BT learning objective.}
For a class of parametric score functions $r_\theta : \mathcal{X} \times \mathcal{Y} \to \mathbb{R}$, the discriminative BT learning objective is 
\begin{equation}
    \label{eq:discriminative-BT-objective}
    \min_{\theta} \mathcal{L}_\text{BT}(\theta) = \mathbb{E}_{P} \left[ -\log P_{r_\theta}(y^+ \succ y^- \mid x) \right].
\end{equation}
where $P_{r_\theta}(y^+ \succ y^- \mid x) = \sigma(r_\theta(x,y^+) - r_\theta(x,y^-))$ is the probability of preferring $y^+$ over $y^-$ given $x$ under the BT model with score function $r_\theta$. Once we have $\hat{r}_\theta$, the learned preference distribution for any unordered pair $\{y,y'\}$ is given by
\begin{equation}
    \hat{\omega}_\theta(y \succ y' \mid x) = \sigma(\hat{r}_\theta(x,y) - \hat{r}_\theta(x,y')).
\end{equation}

In contrast with the generative objective, it is not straightforward to see how this objective would recover the true CPRD. In the next result, we show that this objective is also equivalent to minimizing the KL divergence between the true distribution and the preference distribution induced by the BT model but at the preference level instead of the triplet level.

\begin{definition}[Comparison distribution]
For any distribution $P$ over $\mathcal{X} \times \mathcal{Y} \times \mathcal{Y}$, the comparison distribution $\widetilde P$ is a distribution over $x$ and an unordered pair $\{y,y'\}$ with $y\neq y'$ with density
\begin{equation}
    \widetilde P(x, \{y,y'\} ) \propto P(x,y,y') + P(x,y',y).
\end{equation}
\end{definition}
This distribution essentially captures how often a pair of outcomes $y$ and $y'$ are compared together given the context $x$ in the data, regardless of the outcome.

\begin{restatable}[Decomposition of the discriminative BT learning objective]{theorem}{BTdecomposition}
    \label{thm:decomposition-discriminative-BT-objective}
    The discriminative BT learning objective \eqref{eq:discriminative-BT-objective} can be decomposed as
    \begin{align}
        \mathcal{L}_\text{BT}(\theta) = C + Z \mathop{\mathbb{E}}_{(x, \{y,y'\}) \sim \widetilde P} \Big[ D_{\mathrm{KL}}\big( \operatorname{Bern}(&\omega_P(y \succ  y' \mid  x)) && \nonumber \\
        \Vert \operatorname{Bern}(P_{r_\theta}(y \succ y' \mid x)) \big) \Big]& \label{eq:decomposition-discriminative-BT-objective}
    \end{align}
    where $C$ is independent of $\theta$ and $\operatorname{Bern}(p)$ is the Bernoulli distribution with parameter $p$. $\widetilde P$ is a compairson distribution induced by $P$ and $Z$ is its normalizing constant. The CPRD $\omega_P$ in equation \eqref{eq:decomposition-discriminative-BT-objective} is well-defined whenever $\widetilde P(x, \{y,y'\} ) > 0$.
\end{restatable}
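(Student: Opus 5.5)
The plan is to rewrite the expectation over oriented triplets $(x,y^+,y^-)\sim P$ as an expectation over unordered comparisons drawn from $\widetilde P$, with the orientation drawn from the CPRD. After that rewriting, the cross-entropy splits into a Bernoulli entropy term and a KL term.

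\textbf{Step 1: symmetrize the loss.} Write $\ell_\theta(x,y,y') := -\log \sigma(r_\theta(x,y)-r_\theta(x,y'))$, so that $\mathcal{L}_\text{BT}(\theta)=\int P(x,y,y')\,\ell_\theta(x,y,y')$. Diagonal pairs $y=y'$ contribute only the constant $\log 2$ (they are $\theta$-independent), so we absorb them into $C$; if $\mathcal{Y}$ is continuous they have measure zero anyway. For off-diagonal pairs, average the integrand with its image under the swap $y\leftrightarrow y'$. Each unordered pair $\{y,y'\}$ then contributes
\[
P(x,y,y')\,\ell_\theta(x,y,y') + P(x,y',y)\,\ell_\theta(x,y',y),
\]
counted once; in the integral over ordered pairs this appears as half of the symmetrized integrand.

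\textbf{Step 2: factor through $\widetilde P$ and $\omega_P$.} Let $Z$ be the normalizer of $\widetilde P$, so that $P(x,y,y')+P(x,y',y) = Z\,\widetilde P(x,\{y,y'\})$. Wherever this is positive, $P(x,y,y') = Z\,\widetilde P\,\omega_P(y\succ y'\mid x)$ and $P(x,y',y)=Z\,\widetilde P\,(1-\omega_P(y\succ y'\mid x))$. Pairs with $\widetilde P=0$ contribute nothing, which is why $\omega_P$ only needs to be defined on the support.

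Since $\sigma(-t)=1-\sigma(t)$, we have $\ell_\theta(x,y',y) = -\log(1-P_{r_\theta}(y\succ y'\mid x))$. Hence
\[
\mathcal{L}_\text{BT}(\theta)= Z\,\mathbb{E}_{\widetilde P}\big[H(\operatorname{Bern}(\omega_P),\operatorname{Bern}(P_{r_\theta}))\big],
\]
plus the diagonal constant, where $H$ denotes cross-entropy. Note that $P_{r_\theta}(y\succ y'\mid x)$ is well-defined on unordered pairs up to the orientation convention, and that the convention is used consistently on both sides.

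\textbf{Step 3: split the cross-entropy.} Use $H(\operatorname{Bern}(p),\operatorname{Bern}(q)) = H(\operatorname{Bern}(p)) + D_{\mathrm{KL}}(\operatorname{Bern}(p)\Vert\operatorname{Bern}(q))$. Set
\[
C = Z\,\mathbb{E}_{\widetilde P}\big[H(\operatorname{Bern}(\omega_P))\big] + \text{diagonal term},
\]
which depends only on $P$. This gives the claimed decomposition \eqref{eq:decomposition-discriminative-BT-objective}.

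\textbf{Main obstacle: bookkeeping between ordered and unordered pairs.} The normalizing constant must be consistent with how $\widetilde P$ is defined on unordered pairs, with no stray factor of $2$. I would define $Z$ explicitly as $\int_{\text{unordered}}(P(x,y,y')+P(x,y',y))$, which equals $P(y^+\neq y^-)$. The integral over ordered pairs equals the integral over unordered pairs of the symmetrized sum, and this makes the factor exactly $Z$.

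A secondary technical point is that the entropy term must be finite, so that subtracting it is legitimate. This holds because Bernoulli entropy is bounded by $\log 2$.
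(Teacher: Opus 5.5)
Your proposal is correct and follows essentially the same route as the paper's proof. Both set aside the $\theta$-independent diagonal terms (where $\sigma(0)=1/2$), pair each oriented triple with its swap to rewrite the loss as $Z\,\mathbb{E}_{\widetilde P}$ of a Bernoulli cross-entropy between $\omega_P$ and $P_{r_\theta}$, and split that cross-entropy into the entropy of $\operatorname{Bern}(\omega_P)$, absorbed into $C$, plus the KL term; your explicit $Z = P(y^+\neq y^-)$ and your remark that the entropy term is at most $\log 2$ are small clarifications the paper leaves implicit.
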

\textit{Proof in Appendix~\ref{app:learning-cprd}.}

The KL divergence term in equation \eqref{eq:decomposition-discriminative-BT-objective} is independent of the ordering of $y$ and $y'$ and therefore it is well-defined for any $(x,\{y,y'\})$.
As a direct consequence of this Theorem, whenever the CPRD is BT model, the minimizer of the discriminative BT learning objective \eqref{eq:discriminative-BT-objective} would recover the true CPRD almost surely.
\begin{restatable}{corollary}{BTconsistency}
    \label{cor:consistency-BT-objective}
    For any distribution $P$ such that its CPRD is representable by a BT model, the global minimizer $\hat{r}_\theta$ of the discriminative BT learning objective \eqref{eq:discriminative-BT-objective} satisfies
    \begin{equation}
        P_{\hat{r}_\theta}(y \succ y' \mid x) = \omega_P(y \succ y' \mid x)
    \end{equation}
    for $\widetilde P$-almost every unordered comparison $(x,\{y,y'\})$. 
\end{restatable}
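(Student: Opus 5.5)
The plan is to derive the corollary directly from the decomposition in Theorem~\ref{thm:decomposition-discriminative-BT-objective}. In \eqref{eq:decomposition-discriminative-BT-objective}, the term $C$ does not depend on $\theta$ and $Z>0$. Minimizing $\mathcal{L}_\text{BT}$ is therefore the same as minimizing the expected Bernoulli KL divergence
\begin{equation*}
F(\theta) := \mathop{\mathbb{E}}_{(x,\{y,y'\})\sim \widetilde P}\Big[ D_{\mathrm{KL}}\big(\operatorname{Bern}(\omega_P(y\succ y'\mid x)) \,\Vert\, \operatorname{Bern}(P_{r_\theta}(y\succ y'\mid x))\big)\Big].
\end{equation*}
Since the KL divergence is nonnegative, $F(\theta)\ge 0$ for every $\theta$. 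This gives the lower bound $\mathcal{L}_\text{BT}(\theta)\ge C$.

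The next step is to show that this lower bound is attained. By hypothesis, $\omega_P$ is representable by a BT model, so there is a score $r^*$ with $\omega_P(y\succ y'\mid x)=\sigma(r^*(x,y)-r^*(x,y'))$ for all $(x,\{y,y'\})$ with $P(x,y,y')+P(x,y',y)>0$. These are exactly the comparisons in the support of $\widetilde P$. I take the score class to be rich enough to contain $r^*$, as is implicit in ``global minimizer'' over score functions. Then $F$ evaluated at $r^*$ is an expectation of $D_{\mathrm{KL}}(\operatorname{Bern}(p)\Vert\operatorname{Bern}(p))=0$, so the minimum value of $\mathcal{L}_\text{BT}$ is exactly $C$. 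Consequently, any global minimizer $\hat r_\theta$ must satisfy $F(\hat\theta)=0$.

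Finally, I conclude pointwise equality. The integrand of $F$ is a nonnegative function with zero $\widetilde P$-expectation, so it vanishes $\widetilde P$-almost surely. Bernoulli KL divergence is zero if and only if the two parameters coincide (Gibbs' inequality, with the convention $0\log 0=0$ covering endpoint values $0$ or $1$). Hence $P_{\hat r_\theta}(y\succ y'\mid x)=\omega_P(y\succ y'\mid x)$ for $\widetilde P$-a.e.\ $(x,\{y,y'\})$.

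The only delicate point is the attainment step. If $\omega_P$ takes the value $0$ or $1$ on a set of positive $\widetilde P$-measure, no finite score represents it, since $\sigma$ never reaches $0$ or $1$. In that case the infimum might not be attained. Under the paper's definition, representability by a real-valued $r$ already forces $\omega_P\in(0,1)$ on the support, so this issue does not arise. I would note explicitly that the statement presumes the model class contains the representing score.
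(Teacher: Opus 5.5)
Your proposal is correct and follows essentially the same argument as the paper. Both use the decomposition of Theorem~\ref{thm:decomposition-discriminative-BT-objective} and the nonnegativity of the KL term. Both show that the representing score attains the lower bound $C$ (and, as you note but the paper leaves implicit, this score must lie in the model class). Both then conclude that any global minimizer makes the Bernoulli KL integrand vanish $\widetilde P$-almost everywhere. Your added observation is also correct: representability by a real-valued score forces $\omega_P \in (0,1)$ on the support, so attainment of the minimum is never an issue.
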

\textit{Proof in Appendix~\ref{app:learning-cprd}.}
    
Combining with the previous result (Theorem \ref{thm:CPRD-BT-and-independence}), we have a recovery guarantee under the conditional independence assumption. 
\begin{restatable}[Consistency under conditional independence]{corollary}{BTconsistencyCI}
\label{cor:consistency-BT-objective-conditional-independence}
Assume that $P$ satisfies positive--negative conditional independence. If the BT model family contains a parameter $\theta^*$ such that $P_{r_\theta^*}(y \succ y' \mid x) = \omega_P(y \succ y' \mid x)$, then
\begin{enumerate}
    \item $\theta^*$ minimizes the discriminative BT learning objective \eqref{eq:discriminative-BT-objective}
    \item Any global minimizer $\hat{\theta}$ of the discriminative BT learning objective \eqref{eq:discriminative-BT-objective} satisfies
\begin{equation}
        P_{\hat{r}_\theta}(y \succ y' \mid x) = \omega_P(y \succ y' \mid x)
    \end{equation}
    for $\widetilde P$-almost every unordered comparison $(x,\{y,y'\})$.
\end{enumerate}
\end{restatable}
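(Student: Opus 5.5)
The plan is to derive both parts directly from the KL decomposition in Theorem~\ref{thm:decomposition-discriminative-BT-objective}, using Theorem~\ref{thm:CPRD-BT-and-independence} only to make sure the hypothesis is consistent. By part~1 of Theorem~\ref{thm:CPRD-BT-and-independence}, positive--negative conditional independence makes $\omega_P$ representable by a BT model. One such score is $r(x,y)=\log\frac{p_+(y\mid x)}{p_-(y\mid x)}$. The assumption that the family contains a $\theta^*$ reproducing $\omega_P$ is the realizability condition needed to turn representability into an attained minimum within the parametric class. So the corollary is really Corollary~\ref{cor:consistency-BT-objective} restricted to the parametric family, and I would reprove it directly from the decomposition rather than lean on that corollary's unspecified ``global minimizer''.

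\textbf{Part 1.} Write
\[
\mathcal{L}_\text{BT}(\theta)=C+Z\,\mathbb{E}_{\widetilde P}\big[D_{\mathrm{KL}}(\operatorname{Bern}(\omega_P)\,\Vert\,\operatorname{Bern}(P_{r_\theta}))\big].
\]
Here $C$ does not depend on $\theta$, and $Z>0$ because $P$ is a probability distribution on pairs with $y\neq y'$; if $y^+=y^-$ can occur, that mass contributes only a $\theta$-independent constant, which is absorbed into $C$. Every KL term is nonnegative, so $\mathcal{L}_\text{BT}(\theta)\ge C$ for all $\theta$. At $\theta^*$ the integrand vanishes at every $(x,\{y,y'\})$ with $\widetilde P>0$, since the two Bernoulli parameters agree. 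Hence $\mathcal{L}_\text{BT}(\theta^*)=C$, and $\theta^*$ is a global minimizer.

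\textbf{Part 2.} Let $\hat\theta$ be any global minimizer. Then $\mathcal{L}_\text{BT}(\hat\theta)\le\mathcal{L}_\text{BT}(\theta^*)=C$, so
\[
\mathbb{E}_{\widetilde P}\big[D_{\mathrm{KL}}(\operatorname{Bern}(\omega_P)\,\Vert\,\operatorname{Bern}(P_{r_{\hat\theta}}))\big]=0.
\]
A nonnegative function with zero expectation vanishes $\widetilde P$-almost everywhere. Bernoulli KL is zero exactly when the parameters coincide, using the conventions $0\log 0=0$ and that the KL is $+\infty$ if $\omega_P\in(0,1)$ while $P_r\in\{0,1\}$, which cannot happen because $\sigma$ takes values in $(0,1)$. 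This gives $P_{r_{\hat\theta}}(y\succ y'\mid x)=\omega_P(y\succ y'\mid x)$ for $\widetilde P$-a.e.\ $(x,\{y,y'\})$.

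\textbf{Main obstacle.} The step needing care is finiteness: $C$ must be finite so that the subtraction argument is valid. It is enough to have $\mathcal{L}_\text{BT}(\theta^*)<\infty$. In the proof of the decomposition, $C$ is essentially the expected Bernoulli entropy of $\omega_P$, which is bounded by $\log 2$ times $Z$, so it is finite. I would then check that the cross-entropy at $\theta^*$ equals this entropy, which is automatically finite. If $\omega_P$ equals $0$ or $1$ on a set of positive measure, no finite score reproduces it, so the hypothesis itself rules out that case. With these facts the argument is a direct application of the decomposition.
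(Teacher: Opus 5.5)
Your proposal is correct and is essentially the paper's argument. The paper cites part 1 of Theorem~\ref{thm:CPRD-BT-and-independence} for BT-representability and then invokes Corollary~\ref{cor:consistency-BT-objective}, whose proof is exactly your KL-nonnegativity argument applied to the decomposition in Theorem~\ref{thm:decomposition-discriminative-BT-objective}; you inline that corollary's proof and add two correct remarks the paper leaves implicit, namely that $C$ is finite and that the realizability hypothesis on $\theta^*$ already makes the conditional-independence assumption logically redundant.
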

\textit{Proof in Appendix~\ref{app:learning-cprd}.}

To summarize, the generative and discriminative learning objectives both have a clean statistical interpretation of minimizing the KL divergence between the true distribution and the preference distribution induced by our model. The former is on the triplet level, while the latter is on the preference level. Whenever the model space is realizable, both objectives would recover the true CPRD almost surely.  When the assumption fails, they converge to an explicit projection of the true CPRD on to the model family. For discriminative objective, this is a positive--negative conditional independent distribution.

\newcommand{\tripy}{x,y^+,y^-}
\newcommand{\pplus}{P^+(y^+ \mid x)}
\newcommand{\pminus}{P^-(y^- \mid x)}
\newcommand{\Qp}{Q_{\text{pair}}}

\section{Learning Score Functions from Pairwise Comparisons}
\label{sec:learning_scores}
We have established that the BT objective is a consistent learning objective for CPRD. However, in practice, we only have access to a finite number of triplets, it is not clear how fast the learned score function converges to the target score function (sample complexity). 

In this section, we fix a score function $r^*:\mathcal{X}\times\mathcal{Y}\to\mathbb{R}$ as a target and study the impact of the data distribution $P$ on the sample complexity for recovering $r^*$. The first natural question is how to design a distribution of triplets to learn a desired score function. Recall from Theorem \ref{thm:CPRD-BT-and-independence} that if a distribution $P$ satisfies positive--negative conditional independence, then its CPRD is given by a BT model with a score function $r(x,y) = \log \frac{p_+(y \mid x)}{p_-(y \mid x)}$. We can reverse engineer this: for any target score function $r^*(x,y)$, any positive--negative conditional independence distribution $P$ that satisfies $p_+(y \mid x) = \exp(r^*(x,y)) p_-(y \mid x)$ would recover $r^*$ with the BT model. We call this pair of conditional distributions \textit{BT-consistent} (with respect to $r^*$).

\begin{definition}
Let $r^*:\mathcal{X}\times\mathcal{Y}\to\mathbb{R}$ be a target score function. A pair of conditional distributions $p_+(\cdot \mid x)$ and $p_-(\cdot \mid x)$ is \textit{BT-consistent} with respect to $r^*$ if
\begin{equation}
    p_+(y \mid x) = \exp(r^*(x,y)) p_-(y \mid x)
\end{equation}
for all $x$ and $y$.
\end{definition}

The score function from fitting a BT model on any pair $(p_+,p_-)$ that is BT-consistent would recover $r^*(x,y)$. This answers our first question on how to design a distribution of triplets to learn a desired score function. Since, $p_-$ is not necessarily unique, do different choices of $(p_+,p_-)$ lead to different sample complexity? We will address this question in the following subsections. We start by defining the accuracy of a score function. Here, we measure the accuracy in terms of the agreement of the ordering of scores.

\begin{definition}[Margin]
For any score function $r$ and any context $x$ and responses $y, y'$,  the pairwise margin of $r$ is given by
\begin{equation}
    \label{eq:pairwise-margin}
        \Delta_r(x;y,y') := r(x,y)-r(x,y')
    \end{equation}
\end{definition}

\begin{definition}[Accuracy of a score function] 
Let $Q$ be a distribution over $(x,y)$, we define $\Qp$ as a distribution over $(x,y,y')$ where $x \sim Q_x$ and $y, y' \sim^{iid} Q_y(y \mid x)$. The accuracy of a score function $r$ with respect to the target score $r^*$ and $Q$ is defined as 
\begin{equation}
    \label{eq:accuracy-def}
    \operatorname{Acc}_Q(r) = \Pr_{\substack{x, y, y' \sim \Qp}} [\operatorname{sign}(\Deltar) = \operatorname{sign}(\Deltarstar) ]
\end{equation}
\end{definition}
The margin function $\Deltar$ takes $(x,y,y')$ as input but here we drop them for convenience. The distribution $Q$ controls the support and density of the responses where we want to evaluate the accuracy. We also call it a test distribution. For example, $Q$ could be a uniform distribution over the response space or could be skewed toward the responses that we care about. Our goal is to learn a score function that has high accuracy. Our first observation is that whenever the estimation error of the margin is smaller than the margin then the ordering remains the same.

\begin{proposition}[Accuracy Bound]
\label{thm:accuracy-bound}
For any score $r$, 
\begin{equation}
\label{eq:accuracy-bound}
\operatorname{Acc}_Q(r) \geq \Pr_{\Qp} \left(
    \left|(\Delta_{r} - \Delta_{r^*})\right| \le  \left|\Delta_{r^*}\right|\right)
\end{equation}
\end{proposition}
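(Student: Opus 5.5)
The plan is to prove the bound by a pointwise event inclusion and then take probabilities under $\Qp$. Fix $(x,y,y')$ in the support of $\Qp$ and write $\Delta_r$, $\Delta_{r^*}$ for the two margins at this triple. Let
\[
E=\{|\Delta_r-\Delta_{r^*}|\le|\Delta_{r^*}|\}, \qquad A=\{\operatorname{sign}(\Delta_r)=\operatorname{sign}(\Delta_{r^*})\}.
\]
If $E\subseteq A$ holds (up to a boundary set handled below), then monotonicity of probability gives $\operatorname{Acc}_Q(r)=\Pr_{\Qp}(A)\ge\Pr_{\Qp}(E)$, which is \eqref{eq:accuracy-bound}.

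To verify the inclusion I would split into cases on $\Delta_{r^*}$.

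\emph{Case $\Delta_{r^*}=0$.} Then $E$ forces $|\Delta_r|\le 0$, so $\Delta_r=0$ and both signs are $0$.

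\emph{Case $\Delta_{r^*}>0$.} Unfolding the absolute value, $E$ says $-\Delta_{r^*}\le\Delta_r-\Delta_{r^*}\le\Delta_{r^*}$, i.e.\ $0\le\Delta_r\le 2\Delta_{r^*}$. So $\Delta_r\ge 0$, and $\Delta_r>0$ except at the single point $\Delta_r=0$.

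\emph{Case $\Delta_{r^*}<0$.} This is symmetric, or follows by swapping $y$ and $y'$, which negates both margins.

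The geometric content is simply that $\Delta_r$ lies in a closed interval around $\Delta_{r^*}$ whose radius equals the distance from $\Delta_{r^*}$ to $0$. Hence $\Delta_r$ cannot cross to the other side of zero.

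The only genuine obstacle is the boundary $\Delta_r=0\neq\Delta_{r^*}$, where $E$ holds but the signs differ. I would deal with it in one of two ways, stating explicitly which one is used:
\begin{enumerate}
\item Read the sign agreement in \eqref{eq:accuracy-def} with the natural convention that a tie $\Delta_r=0$ is not counted as a disagreement, i.e.\ agreement means $\Delta_r\Delta_{r^*}\ge 0$ when $\Delta_{r^*}\neq0$. Then $E\subseteq A$ holds exactly.
\item Otherwise, prove the inclusion with the strict inequality $|\Delta_r-\Delta_{r^*}|<|\Delta_{r^*}|$ on $\{\Delta_{r^*}\neq0\}$, which is airtight. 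Then argue that the boundary event $\{\Delta_r=0,\ \Delta_{r^*}\neq0\}$ has $\Qp$-probability zero in the intended setting, e.g.\ continuous responses with non-degenerate learned scores.
\end{enumerate}
Apart from this bookkeeping, the argument is a direct case check followed by monotonicity of $\Pr_{\Qp}$.
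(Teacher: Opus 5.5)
Your proof follows the paper's route. The paper derives the proposition from an order-preservation lemma (if $|b-a|\le|a|$ then $a$ and $b$ have the same sign), applied pointwise with $a=\Delta_{r^*}$ and $b=\Delta_r$, and then uses monotonicity of $\Pr_{\Qp}$; your case split is simply a proof of that lemma.

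Your boundary caveat is genuine, and the paper overlooks it. With $\operatorname{sign}(0)=0$, the lemma fails at $b=0\neq a$. The proposition as written also fails for $r\equiv 0$: the left side is $\Pr_{\Qp}(\Delta_{r^*}=0)$, while the right side is $1$, so it breaks whenever $\Delta_{r^*}\neq 0$ with positive probability. Hence your option 2 cannot cover every score $r$. The sound fixes are your option 1, or stating the bound with a strict inequality. The strict version is all the later accuracy theorem needs, because its proof only uses the event $\{|\Delta_{\hat r}-\Delta_{r^*}|<k\le|\Delta_{r^*}|\}$ with $k>0$.
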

This is a direct consequence of the following lemma.
\begin{lemma}[Order Preservation]
For any real numbers $a$ and $b$, if $|b - a| \leq |a|$, then $a$ and $b$ have the same sign.
\end{lemma}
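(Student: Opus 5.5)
The plan is a direct case analysis on the sign of $a$, using the two-sided form of the hypothesis. First I would rewrite $|b-a|\le |a|$ as
\begin{equation*}
a-|a| \;\le\; b \;\le\; a+|a| .
\end{equation*}
Then I would split into three cases.

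If $a>0$, the display reads $0\le b\le 2a$, so $b$ is nonnegative. If $a<0$, it reads $2a\le b\le 0$, so $b$ is nonpositive. If $a=0$, it forces $b=0$, so $\operatorname{sign}(b)=\operatorname{sign}(a)=0$. Each case is a one-line consequence of the rewritten inequality, so there is no real computation to do.

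The main obstacle is the boundary case, and I expect it to need an adjustment to the statement rather than a cleverer argument. For $a\neq 0$ the weak inequality allows $b=0$ exactly: for instance $a=1$ and $b=0$ satisfy $|b-a|=|a|$. In that case $\operatorname{sign}(b)=0\neq\operatorname{sign}(a)$. So, with the standard convention $\operatorname{sign}(0)=0$ used in the definition of $\operatorname{Acc}_Q$, the lemma as literally stated fails on this boundary. I would repair it in one of two ways:
\begin{itemize}
\item require the strict inequality $|b-a|<|a|$, which forces $a\neq 0$ and makes $b$ lie strictly on the same side of $0$ as $a$, so the signs agree exactly;
\item keep the weak inequality but conclude only weak agreement, $ab\ge 0$.
\end{itemize}

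For the downstream use in the Accuracy Bound, I would apply the strict version with $a=\Delta_{r^*}$ and $b=\Delta_r$. This gives the inclusion of events
\begin{equation*}
\{|\Delta_r-\Delta_{r^*}|<|\Delta_{r^*}|\}\subseteq\{\operatorname{sign}(\Delta_r)=\operatorname{sign}(\Delta_{r^*})\}.
\end{equation*}
Monotonicity of $\Pr_{\Qp}$ then yields the bound with a strict inequality inside the probability. It coincides with the stated bound whenever the tie event $\{|\Delta_r-\Delta_{r^*}|=|\Delta_{r^*}|\}$ has $\Qp$-probability zero. In particular this holds when the event $\{\Delta_{r^*}=0\}$ is null and ties $\Delta_r=0$ do not occur.
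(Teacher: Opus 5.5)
Your argument is correct. Rewriting the hypothesis as $a-|a|\le b\le a+|a|$ and splitting on the sign of $a$ is all the lemma needs. The paper states the lemma without any proof, so there is no alternative route to compare against.

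Your boundary observation is also right, and it is a genuine defect of the statement that does not depend on convention. $|b-a|=|a|$ holds exactly when $b=0$ or $b=2a$. The case $b=0\neq a$ breaks sign agreement under every single-valued choice of $\operatorname{sign}(0)$: take $a=1$ if $\operatorname{sign}(0)\in\{0,-1\}$, and $a=-1$ if $\operatorname{sign}(0)=1$.

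The defect carries over to the Accuracy Bound proposition as written. For a constant score $r$ one has $\Delta_r\equiv 0$, so its right-hand side equals $1$, while $\operatorname{Acc}_Q(r)=\Pr_{Q_{\text{pair}}}(\Delta_{r^*}=0)$ under $\operatorname{sign}(0)=0$. The paper's later accuracy theorem survives once the proposition is replaced by your strict version, because its proof only uses the event $\{|\Delta_{\hat r}-\Delta_{r^*}|<k\le|\Delta_{r^*}|\}$.

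One correction to your final remark. The tie event $\{|\Delta_r-\Delta_{r^*}|=|\Delta_{r^*}|\}$ equals $\{\Delta_r=0\}\cup\{\Delta_r=2\Delta_{r^*}\}$, so the conditions you list do not make it null. Take $r=2r^*$ with $\Delta_{r^*}\neq 0$ almost surely, so both of your conditions hold. Then the strict event is empty while the weak one has probability one, and the two bounds differ.

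What your second condition alone does guarantee is that the stated weak bound is valid. Inside the weak event, signs can disagree only on $\{\Delta_r=0\neq\Delta_{r^*}\}$. Nullity of that set suffices, and nullity of $\{\Delta_{r^*}=0\}$ is not needed.
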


From this Proposition, there are two factors that influence the accuracy: the \textbf{true margin} $|\Delta_{r^*}|$ and the \textbf{estimation error} of the margin $|\Delta_{r} - \Delta_{r^*}|$. The margin is a function of the target score which is fixed apriori. On the other hand, the estimation error depends on how well we are able to learn the score function. We show that the connectivity of the data distribution controls this estimation error. We first define the connectivity degree.

\begin{definition}[Connectivity]
For a distribution $P$ and a hypothesis class $\mathcal{H}$ of score functions and the test distribution $Q$, the connectivity degree of $P$ with respect to $\mathcal{H}$ and $Q$ is defined as
\begin{equation}
    \label{eq:connectivity-def}
    \lambdaconn(P, Q ; \cH) = \inf_{f, g \in \cH} \frac{\bE_{\wP}[(\Delta f- \Delta g)^2]}{\tVar_Q[(f- g)]}
\end{equation}
when
where $\wP$ is the comparison distribution induced by $P$ defined in Theorem \ref{thm:decomposition-discriminative-BT-objective} and $\tVar_Q(r)$ is the variance of $r$ with respect to $Q$ defined as 
\begin{equation}
    \tVar_Q(r) = \bE_{x \sim Q_x} \Var_{y \sim Q_y(y \mid x)} [r(x,y)]
\end{equation}
\end{definition}
The proposed connectivity degree is a general quantity that is inherent to the distribution of the triplets and the hypothesis class. In particular,in the classical tabular BT that aims to recover the ranking of $m$ items and the hypothesis class $\cH = \{\theta : \theta \in \mathbb{R}^m \}$ , $\lambdaconn$ is equivalent to the Fiedler value of the Laplacian matrix of the comparison graph which appears in the bound from \cite{shah2016estimation}. In the linear BT where $\cH = \{r : r(x,y) = w^\top \phi(x,y), w \in \mathbb{R}^d \}$ when $\phi(x,y) \in \mathbb{R}^d$ is a feature encoder, $\lambdaconn$ is the smallest eigenvalue of a covariance matrix. This compliments the prior result for linear BT in \cite{zhu2023principled, shenactive}. We provide the full details of these results in the Appendix \ref{app:sample-complexity}. Another nice property of this connectivity degree is that it is defined and is computable to any hypothesis class. In the experiment section, we compute this for a class of two-layer neural networks.

Now, we are ready to present the result for the estimation error. We will present the result for the realizable setting, and refer to Appendix \ref{app:sample-complexity} for the proofs and agnostic setting.

\begin{restatable}[Estimation error bound, realizable]{theorem}{EstimationRealizable}
    \label{thm:estimation-error-bound-realizable}
Let $r^*$ be the target score function, $\cH$ be a hypothesis class such that $r^* \in \cH$ and $\cH$ is bounded by some constant $B$. Let $P$ be a triplet distribution that is BT-consistent with respect to $r^*$ and let $S$ be a set of $n$ samples drawn from $P$. Let $\hat{r}$ be the empirical risk minimizer of the BT learning objective on $S$ then there exists a constant $c_B, M_B > 0$ such that with probability at least $1 - \delta$ over $S$, 
    \begin{align}
        &\mathbb{E}_{\Qp} [(\Delta_{r} - \Delta_{r^*})^2] \notag \\
        &\lesssim \frac{1}{\lambdaconn} (\frac{1}{ c_B} (\hat{\mathfrak{R}}_S(\cHpair) + M_B\sqrt{\frac{\log(2/\delta)}{n}}))
        \end{align}
when $\hat{\mathfrak{R}}_S$ is an empirical Rademacher complexity of the class $\cHpair: = \{r(x,y) - r(x,y') : r \in \cH\}$ and $\lambdaconn$ is the connectivity degree of $P$ with respect to $\cH$ and $Q$. We use $X \lesssim Y$  as a shorthand for $X \leq cY$ for some constant $c$.
\end{restatable}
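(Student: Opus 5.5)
The bound follows from three links: an excess-risk bound for the empirical risk minimizer, a strong-convexity step turning excess risk into squared margin error on $\wP$, and the connectivity degree $\lambdaconn$ transferring that error to the test pair distribution $\Qp$.

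\emph{Step 1 (excess risk).} Write the loss as $\ell(\Delta) = -\log\sigma(\Delta)$, applied to $\Delta = \Delta_r(x;y^+,y^-)$ with $\Delta_r \in \cHpair$. Since $\cH$ is bounded by $B$, we have $|\Delta_r|\le 2B$. On this interval $\ell$ is $1$-Lipschitz and bounded by $M_B := \log(1+e^{2B})$.

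Standard uniform convergence (symmetrization, Talagrand's contraction lemma, and McDiarmid's inequality with the empirical Rademacher complexity) gives, with probability at least $1-\delta$ and uniformly over $r\in\cH$,
\[
|\LBT(r)-\LBThat(r)/n| \lesssim \hat{\mathfrak{R}}_S(\cHpair) + M_B\sqrt{\log(2/\delta)/n}.
\]
Here $\LBThat$ is the negative log-likelihood sum, normalized by $n$. Since $\hat r$ minimizes the empirical risk and $r^*\in\cH$, we get
\[
\LBT(\hat r)-\LBT(r^*) \lesssim \hat{\mathfrak{R}}_S(\cHpair) + M_B\sqrt{\log(2/\delta)/n}.
\]

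\emph{Step 2 (excess risk to squared margin error).} Apply Theorem~\ref{thm:decomposition-discriminative-BT-objective}. Because $P$ is BT-consistent, Theorem~\ref{thm:CPRD-BT-and-independence} gives $\omega_P(y\succ y'\mid x)=\sigma(\Delta_{r^*})$. Hence
\[
\LBT(r)-\LBT(r^*) = Z\,\bE_{\wP}\big[D_{\mathrm{KL}}(\operatorname{Bern}(\sigma(\Delta_{r^*}))\Vert \operatorname{Bern}(\sigma(\Delta_r)))\big].
\]
View the KL as a function of the logit $t=\Delta_r$. Its second derivative is $\sigma(t)(1-\sigma(t))$, which is at least $c_B:=\sigma(2B)(1-\sigma(2B))$ on $[-2B,2B]$. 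It vanishes with zero derivative at $t=\Delta_{r^*}$. Taylor's theorem then gives
\[
\mathrm{KL}\ge \tfrac{c_B}{2}(\Delta_r-\Delta_{r^*})^2.
\]
Therefore $\bE_{\wP}[(\Delta_{\hat r}-\Delta_{r^*})^2] \le \frac{2}{Zc_B}(\LBT(\hat r)-\LBT(r^*))$. The constant $Z$ is at most $1$ up to absorption, so it goes into $\lesssim$.

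\emph{Step 3 (transfer to $Q$).} Take $f=\hat r$ and $g=r^*$ in the definition of $\lambdaconn$. This gives
\[
\tVar_Q(\hat r-r^*)\le \bE_{\wP}[(\Delta\hat r-\Delta r^*)^2]/\lambdaconn.
\]
Next, for $y,y'$ i.i.d.\ from $Q_y(\cdot\mid x)$ we have $\bE[(u(y)-u(y'))^2]=2\Var(u)$. Applying this with $u=\hat r-r^*$ gives
\[
\bE_{\Qp}[(\Delta_{\hat r}-\Delta_{r^*})^2]=2\tVar_Q(\hat r-r^*).
\]
Chaining Steps 1–3 yields the stated bound, with $r=\hat r$ in the statement.

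\emph{Main obstacle.} I expect Step 2 to be the delicate part. The local strong convexity holds only because both margins lie in $[-2B,2B]$, so it must be stated in the logit parameter rather than the probability. Also, $\wP$ weights unordered pairs while $P$ weights ordered triplets, so the factor $Z$ and the symmetrization of the KL must be tracked carefully. Step 1 is also slightly delicate: it should be applied with the loss composed on $\cHpair$, which is where the contraction lemma uses the Lipschitz constant on the bounded range.
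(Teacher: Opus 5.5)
Your proposal is correct and follows the paper's proof step for step: a Rademacher/contraction bound on the excess risk, $c_B$-strong convexity of the Bernoulli cross-entropy in the margin on $[-2B,2B]$, and the connectivity definition with $f=\hat r$, $g=r^*$ combined with the two-sample variance identity; the only difference is that you work directly in the realizable case, where the paper's Cauchy--Schwarz misspecification term is identically zero. One small correction: Step 2 produces a factor $1/Z$, so ``$Z\le 1$'' points the wrong way for absorbing it, and it can only be absorbed as a $P$-dependent constant (it equals $1$ when $y^+\neq y^-$ almost surely), which is also what the paper does implicitly when it drops $1/Z$ in its uniform-convergence step.
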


The estimation error is upper bounded by the product of a complexity term and $\frac{1}{\lambdaconn}$. This suggest that when the distribution has a larger connectivity degree, it would have  a lower estimation error. 
Finally, we combine with the previous result to provide the accuracy bound.

\begin{restatable}[Accuracy bound, realizable case]{theorem}{Accboundrealizable}
\label{thm:accuracy-bound-realizable}
Let $\hat{r}$ be the empirical risk minizer of the BT learning objective, then there exists a constant $D > 0$ such that with probability at least $1 - \delta$, 
\begin{equation}
    \hspace*{-0.35cm}\operatorname{Acc}_Q(\hat{r}) \geq
    \sup_{k > 0}
    \Big[
        \underbrace{\Pr_Q(|\Deltarstar| \geq k)}_{\text{margin}}
        \;-\;
        \underbrace{D \frac{ \operatorname{Comp}(\cH, \delta)}{k^2\lambdaconn}}_{\text{connectivity}}
    \Big]
\end{equation}
where the complexity term is defined as in right hand side of the Theorem \ref{thm:estimation-error-bound-realizable}.
\end{restatable}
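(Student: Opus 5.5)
The plan is to combine Proposition~\ref{thm:accuracy-bound} with Theorem~\ref{thm:estimation-error-bound-realizable} through a Markov/Chebyshev-type argument. Fix any $k>0$ and work on the event of probability at least $1-\delta$ on which the estimation error bound of Theorem~\ref{thm:estimation-error-bound-realizable} holds. On this event,
\begin{equation*}
\mathbb{E}_{\Qp}\big[(\Deltarhat-\Deltarstar)^2\big] \le D\,\frac{\operatorname{Comp}(\cH,\delta)}{\lambdaconn},
\end{equation*}
where $D$ absorbs the implicit constant hidden in $\lesssim$. Since this event does not depend on $k$, the final inequality holds simultaneously for all $k$, which lets us take the supremum at the end.

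Next I would lower bound the accuracy by a set inclusion. By Proposition~\ref{thm:accuracy-bound},
\begin{equation*}
\operatorname{Acc}_Q(\rhat) \ge \Pr_{\Qp}\big(|\Deltarhat-\Deltarstar|\le|\Deltarstar|\big).
\end{equation*}
The event $\{|\Deltarstar|\ge k\}\cap\{|\Deltarhat-\Deltarstar|< k\}$ is contained in the event on the right-hand side. Writing $A=\{|\Deltarstar|\ge k\}$ and $B=\{|\Deltarhat-\Deltarstar|\ge k\}$, we get $\Pr(A\cap B^c)\ge \Pr(A)-\Pr(B)$. Markov's inequality applied to the squared error then gives
\begin{equation*}
\Pr_{\Qp}(B)\le \frac{\mathbb{E}_{\Qp}[(\Deltarhat-\Deltarstar)^2]}{k^2}\le D\,\frac{\operatorname{Comp}(\cH,\delta)}{k^2\lambdaconn}.
\end{equation*}
Combining these and taking the supremum over $k>0$ yields the claim. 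Here $\Pr_Q(|\Deltarstar|\ge k)$ is read as a probability under $\Qp$.

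The only delicate step is the bookkeeping between distributions. Theorem~\ref{thm:estimation-error-bound-realizable} is stated with respect to $\Qp$, which is the same distribution used in the accuracy definition, so Markov's inequality applies directly. If instead the estimation bound were only available under $\wP$, one would need the connectivity definition to transfer it. Concretely, $\lambdaconn$ controls $\tVar_Q(\rhat-r^*)$, and $\mathbb{E}_{\Qp}[(\Deltarhat-\Deltarstar)^2]=2\tVar_Q(\rhat-r^*)$ because $y,y'$ are i.i.d.\ given $x$. This factor of $2$ is absorbed into $D$. I expect no real obstacle beyond this constant tracking.
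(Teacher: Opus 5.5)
Your proposal is correct and follows essentially the same route as the paper: Proposition~\ref{thm:accuracy-bound}, the inclusion $\{|\Deltarstar|\ge k\}\cap\{|\Deltarhat-\Deltarstar|<k\}$, the bound $\Pr(A\cap B^c)\ge\Pr(A)-\Pr(B)$, Markov's inequality on the squared margin error controlled by Theorem~\ref{thm:estimation-error-bound-realizable}, and a supremum over $k$. Your remark that the $1-\delta$ event does not depend on $k$ is what justifies taking the supremum inside the high-probability statement; the paper leaves this point implicit.
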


\paragraph{Summary.}
Our analysis reveals two fundamental factors that govern learning efficiency from pairwise comparisons:
\begin{enumerate}
    \item \textbf{Pairwise Margin}: Larger margins between response scores make pairs easier to rank correctly, even with moderate estimation error.
    \item \textbf{Comparison Graph Connectivity}: Higher connectivity reduces estimation error by ensuring information propagates well across all response pairs.
\end{enumerate}

These insights provide practitioners with a framework for understanding when pairwise preference learning will succeed: design data collection processes that induce both large margins and well-connected comparison graphs.

\section{Experiments}
\label{sec:experiments}
We conduct experiments on synthetic data to validate our theoretical findings. With synthetic data, we have access to the ground-truth score function $r^*$, enabling us to precisely measure the effects of margin and connectivity on learning efficiency.
\subsection{Setup.}
\textbf{Context and Response Space.}
Let $\mathcal{X} = \mathcal{Y} = \{x_1, x_2, \dots, x_m\}$ be a finite set of contexts and responses where $x_i \in \mathbb{R}^d$. We use the same space for contexts and responses to simplify the synthetic setting. One could interpret the score function as a similarity function between items.  We set $m = 16$ and $d = 128$ for simplicity. To generate the dataset, we sample $x_i$ from $\mathcal{N}(0, I_d)$.

\textbf{Ground-truth Score Function.}
The ground-truth score is defined via a two-layer neural network $f^*: \mathbb{R}^d \to \mathbb{R}^{e}$ with hidden dimension $h = 32$ and output embedding dimension $e = 8$, using ReLU activations:
\begin{equation}
    r^*(x, y) = \operatorname{cosine-similarity}(f^*(x), f^*(y)),
\end{equation}
ensuring scores lie in $[-1, 1]$.

\textbf{Triplet Generation.} We generate $n$ triplets $S = \{(x_i, y_i^+, y_i^-)\}_{i=1}^n$ by sampling $x_i$ uniformly from $\mathcal{X}$ and drawing $y_i^+$ and $y_i^-$ from a negative distribution $p^-(\cdot \mid x_i)$ and a positive distribution $p^+(\cdot \mid x_i)$ that is BT-consistent with $r^*$, i.e., $\log(p^+(y \mid x) / p^-(y \mid x)) = r^*(x, y)$. By default, we use a uniform negative distribution $p^-(\cdot \mid x) = \frac{1}{m}$. The validation set is also generated in the same way.

\textbf{Training and Evaluation.} We train a two-layer neural network with the same architecture as $f^*$ to minimize the empirical BT loss on $S$ (equation \eqref{eq:BT-likelihood-empirical}). We use Adam optimizer with learning rate selected from $\{10^{-4}, 10^{-3}, 10^{-2}\}$. We train for a fixed number of epochs 200 and select hyperparameters via a validation set of 2048 triplets drawn from the same distribution based on validation loss. We report accuracy with respect to the uniform distribution $Q$ over all pairs $(x_i,x_j)$ with standard error over 5 random seeds.

\subsection{Larger Margin Leads to Higher Accuracy}
\label{sec:exp-margin}

Our theory suggests that larger score margins lead to higher accuracy, since moderate estimation errors are less likely to flip the ordering of well-separated pairs.
To isolate the effect of margin from other factors, we compare learning from the original score $r^*$ against a \emph{rank-normalized} version $r^*_{\text{rank}}$ that preserves the ordering but maximizes the minimum margin between any two responses.

\paragraph{Rank normalization.}
For each context $x$, we sort responses by $r^*(x, y)$ and assign:
\begin{equation}
    r^*_{\text{rank}}(x, y) = -1 + \frac{2 \cdot \text{rank}(y)}{m},
\end{equation}
where $\text{rank}(y) \in \{1, \ldots, m\}$  is the rank of $y$ among all responses for context $x$ (ascending).
This transformation preserves the ordering while maximizing the minimum pairwise margin to $2/m$.
We construct BT-consistent distributions for both $r^*$ and $r^*_{\text{rank}}$ using the same uniform $p^-$.

\paragraph{Results.}
As shown in Figure~\ref{fig:margin-effect}, learning from $r^*_{\text{rank}}$ achieves substantially higher accuracy than learning from $r^*$, particularly in the small-sample regime. As $n$ increases, the gap narrows, consistent with our theory: with sufficient data, estimation error becomes small enough that even small margins suffice for correct ordering. To further validate our theory, we analyze accuracy on pairs with the smallest margins (bottom 10$\%$ and 30$\%$). The accuracy gap between $r^*_{\text{rank}}$ and $r^*$ is most pronounced on these pairs, confirming that margin amplification primarily benefits pairs that are otherwise difficult to distinguish. See Appendix \ref{sec:additional-experiments} for more details.

\begin{figure}[t]
    \hspace{-0.25cm} 
    \includegraphics[width=0.95\linewidth]{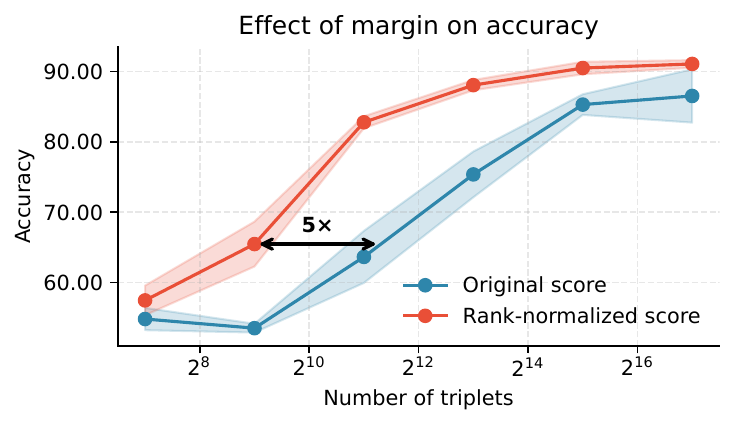}
    \caption{Effect of margin on learning efficiency. Learning from the rank-normalized score $r^*_{\text{rank}}$ (which has larger minimum margin) achieves higher accuracy than learning from $r^*$, especially when the number of triplets is small. The gap diminishes as sample size increases.}
    \label{fig:margin-effect}
\end{figure}

\subsection{Low Connectivity Predicts Poor Accuracy}
\label{sec:exp-connectivity}

Our theory suggests that higher connectivity leads to lower estimation error. In this experiment, we fix the score function $r^*$ and vary the $p^-$ to change the connectivity of the triplet distribution.

\paragraph{Varying the negative distribution.}
We fix $r^*$ and parameterize the negative distribution as $p^-(y \mid x) \propto \exp(\alpha \cdot r^*(x, y))$ for $\alpha \in [-16, 16]$.
Different values of $\alpha$ induce different sampling strategies: i)  $\alpha < 0$: Easy negatives, prefer sample with low scores ii) $\alpha = 0$: Uniform sampling, iii)  $\alpha > 0$: Hard negatives, prefer sample with high scores. For each $\alpha$, we compute the corresponding BT-consistent $p^+$ and measure both accuracy and the connectivity degree $\lambdaconn(P^\alpha)$.

\paragraph{Results.}
As shown in Figure \ref{fig:connectivity-effect}, extreme values of $\alpha$ correspond to overly easy or hard negatives which lead to low connectivity. Moreover, low connectivity reliably predicts poor accuracy, though the converse does not hold: high connectivity does not guarantee high accuracy. This asymmetry reflects the worst-case nature of our bound: low connectivity means some functions in the hypothesis class are hard to learn, but the actual target may not be among them.
\begin{figure}[t]
    \centering
    \includegraphics[width=0.95\linewidth]{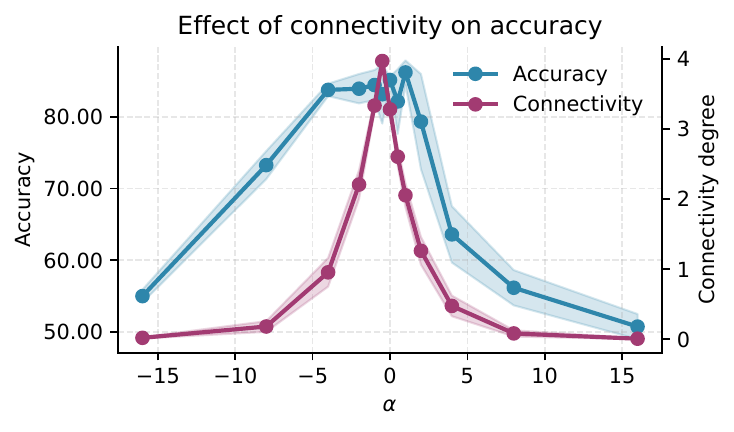}
    \caption{Effect of comparison graph connectivity on learning. High connectivity degress values are associated with high accuracy, though the relationship is not strictly monotonic.}
    \label{fig:connectivity-effect}
\end{figure}

\subsection{Optimizing Connectivity Helps When Connectivity is the Bottleneck}
\label{sec:exp-optimize-connectivity}

We investigate whether directly optimizing $p^-$ to increase the connectivity degree improves learning.

\paragraph{Method.}
We scale the target reward by a factor of $\beta$, $r^*_{\beta} = \beta \cdot r^*(x,y)$ for different values of $\beta$. This provides a way to control the difficulty of the task where smaller $\beta$ corresponds to a task with small margin and large connectivity while a larger $\beta$ corresponds to a task with large margin and small connectivity. We compare two negative distributions 
\begin{itemize}
    \item \textbf{Baseline}: uniform negative distribution $p^-(\cdot \mid x) = \frac{1}{m}$
    \item \textbf{Optimized}: we optimize $p^-$ to maximize the connectivity degree $\lambdaconn(P^\beta)$.
\end{itemize}

\begin{figure}[t]
    \centering
    \includegraphics[width=0.95\linewidth]{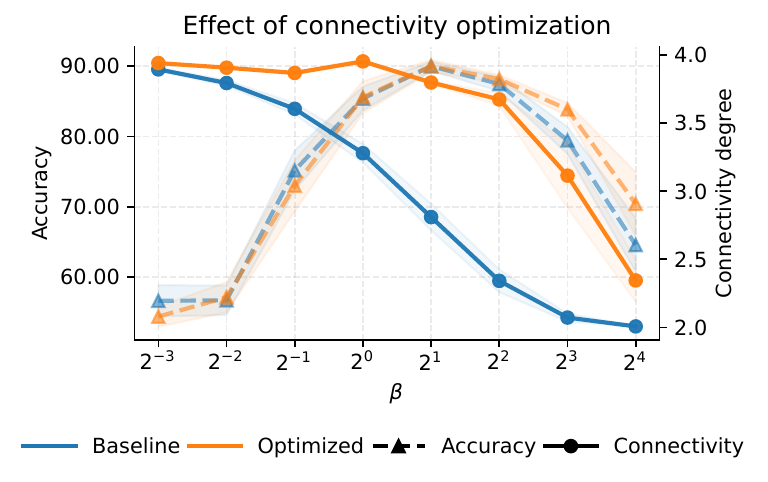}
    \caption{Effect of connectivity optimization on learning. Optimizing $p^-$ to increase the connectivity can help with the accuracy when the task is hard where $\beta$ is large.}
    \label{fig:optimize-connectivity-effect}
\end{figure}
\paragraph{Results.}

First, optimizing $p^-$ leads to higher connectivity than uniform sampling across all values of $\beta$. However, higher connectivity only improves accuracy in the \emph{connectivity-dominated regime} (large $\beta$), where the margin is sufficient but connectivity is the bottleneck. In the \emph{margin-dominated regime} (small $\beta$), the margin is too small for learning to succeed—no amount of connectivity optimization can compensate. This validates that connectivity can be a meaningful quantity to optimize in practice when designing a data distribution.

\subsection{BT Converges to the KL Projection}
\label{sec:exp-kl-projection}

Our theory predicts that when the true CPRD is non-BT, BT training does not
recover the true CPRD itself, but rather its KL projection onto the BT family.
To verify this mechanism, we construct a synthetic non-BT CPRD by perturbing a
ground-truth BT CPRD in a way that breaks BT representability while preserving
the same BT projection target $r^*$ (Appendix \ref{app:nonbt-cprd-experiment}). We then train BT models on samples from both the original and perturbed distributions.

\begin{table}[h]
    \centering
    \scriptsize
    \setlength{\tabcolsep}{3.5pt}
    \caption{
    BT training under well-specified and misspecified CPRDs.
    Both settings converge toward the same BT projection target $r^*$, while the
    pairwise-probability error remains larger under the non-BT CPRD, reflecting
    irreducible model misspecification.
    }
    \label{tab:nonbt-kl-projection}
    \begin{tabular}{lcc}
    \toprule
    $n$ & RMSE to $r^*$ $(\times 10^{-1})$ & Pairwise MAE $(\times 10^{-2})$ \\
    \midrule
    \multicolumn{3}{l}{\textbf{BT CPRD}} \\
    128    & $2.422 \pm 0.302$ & $6.909 \pm 0.843$ \\
    512    & $2.450 \pm 0.267$ & $6.917 \pm 0.761$ \\
    2048   & $1.909 \pm 0.101$ & $5.354 \pm 0.265$ \\
    8192   & $1.520 \pm 0.118$ & $4.277 \pm 0.340$ \\
    32768  & $1.019 \pm 0.123$ & $2.879 \pm 0.344$ \\
    131072 & $0.983 \pm 0.138$ & $2.754 \pm 0.412$ \\
    \midrule
    \multicolumn{3}{l}{\textbf{Non-BT CPRD}} \\
    128    & $2.534 \pm 0.300$ & $10.234 \pm 0.402$ \\
    512    & $2.446 \pm 0.301$ & $10.019 \pm 0.452$ \\
    2048   & $2.098 \pm 0.154$ & $9.211 \pm 0.179$ \\
    8192   & $1.500 \pm 0.091$ & $8.082 \pm 0.203$ \\
    32768  & $1.033 \pm 0.131$ & $7.202 \pm 0.433$ \\
    131072 & $0.727 \pm 0.128$ & $6.618 \pm 0.314$ \\
    \bottomrule
    \end{tabular}
    \end{table}
\paragraph{Results.}
As the sample size grows, the learned reward converges to the same BT target
$r^*$ in both settings, consistent with the KL projection view. At the same
time, the pairwise-probability MAE remains larger in the non-BT
setting, reflecting the irreducible misspecification.

\subsection{Connectivity on Real Preference Datasets}
\label{sec:real-world-experiments}

Our theory predicts that good reward-model generalization requires both large
reward margins and connectivity to the evaluation distribution. We test whether
this trade-off appears on real preference datasets, and whether connectivity can serve as a practical diagnostic rather than only a theoretical quantity.

We train a linear reward head on frozen embeddings from \texttt{Llama-3.1-8B-Instruct} \citep{grattafiori2024llama} with a BT objective using 10K preference pairs from each training dataset: \texttt{HH-RLHF} \citep{bai2022training}, \texttt{PKU-SafeRLHF} \citep{ji2025pku}, \texttt{SHP} \citep{pmlr-v162-ethayarajh22a}, and \texttt{UltraFeedback} \citep{cuiultrafeedback}.
All models are evaluated on the same \texttt{UltraFeedback} test set, relabeled using \texttt{Skywork-Reward-V2-Llama-3.1-8B} \citep{liu2025skywork}.
This controlled setup allows us to measure both
the learned reward margins and a test-distribution variant of the connectivity
degree. Table~\ref{tab:real_connectivity} reports averages over five random seeds, with standard errors. Since the test distribution is not uniform over a test distribution $Q$ anymore, we discussed how we calculate the connectivity degree for this setting in Appendix \ref{app:connectivity-degree-general}.

\begin{table}[h]
    \centering
    \scriptsize
    \setlength{\tabcolsep}{3.5pt}
    \caption{
    Margin, connectivity, and accuracy for reward models trained on different
    preference datasets and evaluated on the UltraFeedback test distribution.
    Connectivity is reported at scale $10^{-4}$.
    }
    \label{tab:real-connectivity}
    \begin{tabular}{lccc}
    \toprule
    Dataset & Margin & Conn. $(\times 10^{-4})$ & Acc. \\
    \midrule
    HH-RLHF       & $10.766 \pm 0.036$ & $3.30 \pm 0.05$  & $74.020 \pm 0.339$ \\
    PKU-SafeRLHF  & $10.527 \pm 0.017$ & $0.85 \pm 0.05$  & $74.790 \pm 0.545$ \\
    SHP           & $8.833 \pm 0.026$  & $15.14 \pm 0.29$ & $78.380 \pm 0.244$ \\
    UltraFeedback & $11.255 \pm 0.029$ & $8.96 \pm 0.16$  & $82.170 \pm 0.129$ \\
    \bottomrule
    \label{tab:real_connectivity}
    \end{tabular}
    
\end{table}

\paragraph{Results.}
The results show that connectivity helps explain reward-model performance beyond
what is captured by margins alone. HH-RLHF and PKU-SafeRLHF achieve reasonably
large margins, but have substantially lower connectivity to the UltraFeedback
test distribution and underperform SHP. This is consistent with domain mismatch:
HH-RLHF and PKU-SafeRLHF emphasize safety-related comparisons, whereas
UltraFeedback contains broader general-purpose instruction-following
comparisons. At the same time, connectivity alone is not sufficient. SHP has the
highest connectivity, but UltraFeedback achieves the best accuracy because it
also induces substantially larger margins. Overall, these results support our
theory: strong generalization requires both adequate coverage of the relevant
test-distribution comparisons and sufficiently large reward margins. They also
show that connectivity is not merely a theoretical quantity; it can be computed
on real preference datasets and helps explain accuracy differences that margin
alone misses.



\section{Discussion}
We provide a framework for understanding preference learning from pairwise
comparisons through a data-centric lens. The key object is the CPRD, which
captures the preference information encoded in the triplet data distribution. We
establish precise conditions under which the CPRD is representable by a BT model
and analyze the learning efficiency of the BT objective. Our theoretical and
empirical results identify margin and connectivity as the governing factors of
learning efficiency.

These results suggest two diagnostics for preference data. First,
among data collection strategies that are otherwise reasonable for the target
task, one should prefer triplet distributions with better connectivity relative
to the intended evaluation distribution, since such distributions provide better
coverage of the pairwise variations that matter at test time. Second, our
characterization of BT representability gives a concrete way to reason about
model misspecification. In particular, BT is appropriate when the chosen and
rejected responses satisfy the positive--negative conditional-independence
structure. Strong violations of this structure, for example when a positive
response is produced by editing a negative response, suggest that the BT model
may be misspecified for the resulting preference data. Thus, our framework does
not only describe when BT learning is statistically efficient, but also provides a diagnostic lens for deciding when BT is a reasonable modeling assumption.

Several practical challenges remain. Since the target score function is
implicit, it is unclear how to evaluate or increase the margin directly in
practice. Although connectivity is computable from data, developing actionable
strategies for improving it remains an open question. Similarly, while our
characterization reduces BT misspecification to a positive--negative
conditional-independence question, testing this condition in high-dimensional
preference data may be difficult; in simpler parametric or nonparametric
settings, one could apply standard conditional-independence tests, such as
Fisher's $z$-test or kernel-based tests. Finally, our analysis assumes that the
target score function is known when designing the distribution, whereas in
practice it is exactly what we want to learn. One possible relaxation is to
allow distribution changes that also shift the target, provided the new target
remains desirable. Developing practical data curation strategies that yield
distributions which are both easy to learn from and induce good target score
functions is a promising direction for future work.

\section*{Acknowledgements}
This work was supported in part by Bloomberg Data Science Fellowship, AFRL and DARPA via FA8750-23-2-1015, ONR via N00014-23-1-2368, and NSF via IIS-1955532 and IIS-1901403. The authors thank Korinna Fragkia and Andreas Kalavas for their insightful feedback on the draft.

\section*{Impact Statement}

This paper presents work whose goal is to advance the field of Machine
Learning. There are many potential societal consequences of our work, none
which we feel must be specifically highlighted here.

\bibliographystyle{icml2026}
\bibliography{reference}

\newpage
\appendix
\onecolumn

\section{Proofs for CPRD Representability}
\label{app:cprd-representability}

\BTfactorization*
\begin{proof}
    Fix $x$ and $y\neq y'$ with $P(x,y,y')P(x,y',y)>0$. By the CPRD definition,
    \begin{equation}
    \label{eq:odds-P-app}
    \frac{\omega_P(y\succ y'\mid x)}{\omega_P(y'\succ y\mid x)}=\frac{P(x,y,y')}{P(x,y',y)}.
    \end{equation}
    
    ($\Rightarrow$) If $\omega_P$ is representable by a BT model, then there exists a score function $r:\mathcal{X}\times\mathcal{Y}\to\mathbb{R}$ such that 
    \begin{equation}
    \omega_P(y\succ y'\mid x)=\sigma\!\big(r(x,y)-r(x,y')\big).
    \end{equation}
    In addition,
    \begin{equation}
    \omega_P(y'\succ y\mid x)=\sigma\!\big(r(x,y')-r(x,y)\big).
    \end{equation}

    Write $\Delta =r(x,y)-r(x,y')$, by substituting this into equation \eqref{eq:odds-P-app} gives
    \begin{equation}
    \frac{P(x,y,y')}{P(x,y',y)}= \frac{\sigma(\Delta)}{\sigma(-\Delta)}= \exp(\Delta).
    \end{equation}
    Setting $h(x,y):=\exp(r(x,y))>0$ yields~\eqref{eq:ratio-h}.

    ($\Leftarrow$) Conversely, assume~\eqref{eq:ratio-h} holds for some $h>0$ and define $r(x,y):=\log h(x,y)$.
    Then~\eqref{eq:ratio-h} becomes
    \begin{equation}
    \frac{P(x,y,y')}{P(x,y',y)}=\exp\!\big(r(x,y)-r(x,y')\big).
    \end{equation}
    By the identity $\sigma(t)/\sigma(-t)= \exp(t)$, and using~\eqref{eq:odds-P-app} we have
    \begin{equation}
        \frac{\omega_P(y\succ y'\mid x)}{\omega_P(y'\succ y\mid x)} = \frac{P(x,y,y')}{P(x,y',y)} = \frac{\sigma(\Delta)}{\sigma(-\Delta)} 
    \end{equation}

    Since $\omega_P(y\succ y'\mid x) + \omega_P(y'\succ y\mid x)=1$ and  $\sigma(\Delta)+\sigma(-\Delta)=1$, we can conclude that $\omega_P(y\succ y'\mid x)  = \sigma(\Delta)$ as desired.
\end{proof}

\CPRDBTindependence*
\begin{proof}
$(\Rightarrow)$ Assume that $P$ is a positive--negative conditional independence distribution, then for any $x$ and $y\neq y'$, we can write $P$ as

\begin{equation}
    P(x,y^+,y^-) = P_X(x)\, p_+(y^+ \mid x)\, p_-(y^- \mid x).
\end{equation}
Therefore,
\begin{align}
    \frac{P(x,y,y')}{P(x,y',y)} &= \frac{P_X(x)\, p_+(y \mid x)\, p_-(y' \mid x)}{P_X(x)\, p_+(y' \mid x)\, p_-(y \mid x)} \\
    &= \frac{p_+(y \mid x)\, p_-(y' \mid x)}{p_+(y' \mid x)\, p_-(y \mid x)}\\
    &=  \frac{h(x,y)}{h(x,y')}.
\end{align}
where we define $h(x,y) := \frac{p_+(y \mid x)}{p_-(y \mid x)}$. Since we can write the ratio of the probability in this form, by Proposition \ref{prop:BT_factorization}, we can conclude that $\omega_P$ is representable by a BT model. In addition, recall from the proof of Proposition \ref{prop:BT_factorization} that the score function $r$ is given by $r(x,y) = \log h(x,y)$. Therefore, we also have 
\begin{equation}
\label{eq:score-function-p-app}
r(x,y) = \log \frac{p_+(y \mid x)}{p_-(y \mid x)}.
\end{equation}
$(\Leftarrow)$ Conversely, assume that the CPRD $\omega_P$ is representable by a BT model, then there exists a score function $r:\mathcal{X}\times\mathcal{Y}\to\mathbb{R}$ such that
\begin{equation}
    \omega_P(y \succ y' \mid x) = \sigma(r(x,y)-r(x,y')).
\end{equation}
We will construct a positive--negative conditional independence distribution $Q$ such that $\omega_Q = \omega_P$. For any $x$, assume that we have a base marginal distribution $\mu(\cdot \mid x)$ on $\mathcal{Y}$ with a full support. Define $q_-(y \mid x) = \mu(y \mid x) / Z_-(x)$ where $Z_-(x) = \int \mu(y \mid x) dy$ is value that ensure that $q_-(y \mid x)$ is a valid probability distribution. Define $q_+(y \mid x) = \mu(y \mid x) \exp(r(x,y))/ Z_+(x)$ where $Z_+(x) = \int \mu(y \mid x) \exp(r(x,y)) dy$ is value that ensure that $q_+(y \mid x)$ is a valid probability distribution. The distribution $Q$ is given by
\begin{equation}
    Q(x,y^+,y^-) = P_X(x)\, q_+(y^+ \mid x)\, q_-(y^- \mid x).
\end{equation}
We can see that
\begin{equation}
    \frac{Q(x,y,y')}{Q(x,y',y)} = \frac{q_+(y \mid x)\, q_-(y' \mid x)}{q_+(y' \mid x)\, q_-(y \mid x)} = \frac{h'(x,y)}{h'(x,y')}
\end{equation}
where we define $h'(x,y) := \frac{q_+(y \mid x)}{q_-(y \mid x)}$. Since we can write the ratio of the probability in this form, by Proposition \ref{prop:BT_factorization}, we can conclude that $\omega_Q$ is representable by a BT model. Similarly, from the proof of Proposition \ref{prop:BT_factorization}, the score function $r_Q$ of the BT model from fitting with the distribution $Q$is given by $r_Q(x,y) = \log h'(x,y)$,
\begin{align}
    r_Q(x,y) &= \log \frac{q_+(y \mid x)}{q_-(y \mid x)} \\
    &= \log \frac{\mu(y \mid x) \exp(r(x,y))/ Z_+(x)}{\mu(y \mid x) / Z_-(x)} \\
    &= r(x,y) + \log \frac{Z_-(x)}{Z_+(x)}.
\end{align}
Finally, we have
\begin{align}
    \omega_Q(y \succ y' \mid x) &= \sigma(r_Q(x,y)-r_Q(x,y')) \\
    &= \sigma(r(x,y) + \log \frac{Z_-(x)}{Z_+(x)}) - (r(x,y') + \log \frac{Z_-(x)}{Z_+(x)})) \\
    &= \sigma(r(x,y)-r(x,y')) \\
    &= \omega_P(y \succ y' \mid x).
\end{align}
Therefore, we have $\omega_Q = \omega_P$ as desired.
\end{proof}

\section{Proofs for Learning CPRD}
\label{app:learning-cprd}

\BTdecomposition*
\begin{proof}
We will start by writing the proof for the finite case when $\mathcal{X}$ and $\mathcal{Y}$ are finite. The continuous case follows by replacing sums with integrals.

The main idea is to write the objective in terms of unordered comparisons. For convenience, let $\mathcal I \subset \mathcal{X}\times\mathcal{Y}\times\mathcal{Y}$ be any index set such that for every $x\in\mathcal{X}$
and every distinct $y\neq y'\in\mathcal{Y}$, exactly one of $(x,y,y')$ and $(x,y',y)$ belongs to
$\mathcal I$. Recall the definition of a comparison distribution $\widetilde P$,
\begin{equation}
    \widetilde P(x, \{y,y'\} ) = \frac{P(x,y,y') + P(x,y',y)}{Z},
\end{equation}
where $Z = \sum_{(x,y,y') \in \mathcal I} P(x,y,y') + P(x,y',y)$ is a normalizing constant. This distribution has a support over the this set $\mathcal I$. With this distribution, we can group the sum over $(y,y')$ and $(y',y)$ in the BT objectiveto obtain

\begin{align}
    \mathcal{L}_\text{BT}(\theta) &= -\sum_{x \in \mathcal{X}, y, y' \in \mathcal{Y}} P(x,y,y') \log P_{r_\theta}(y \succ y' \mid x) \\
    &= -  \sum_{(x,y,y') \in \mathcal{I}} P(x,y,y') \log P_{r_\theta}(y \succ y' \mid x) - \sum_{x \in \mathcal{X}, y \in \mathcal{Y} } P(x,y,y) \log P_{r_\theta}(y \succ y \mid x)
\end{align}
Since $P_{r_\theta}(y \succ y \mid x) = \sigma(0) = 0.5$, the second term is a constant that only depends on the distribution $P$ and we can write
\begin{align}
    \mathcal{L}_\text{BT}(\theta) &= C -  \sum_{(x,y,y') \in \mathcal{I}} P(x,y,y') \log P_{r_\theta}(y \succ y' \mid x)\\
     &= C - \sum_{(x,y,y') \in \mathcal{I}} P(x,y,y')( \log P_{r_\theta}(y \succ y' \mid x)  + P(x,y',y) \log (P_{r_\theta}(y' \succ y \mid x))) \\
    &= C - Z\sum_{(x,y,y') \in \mathcal{I}} \widetilde P(x, \{y,y'\})
    \Bigl(
      \frac{P(x,y,y')}{Z\widetilde P(x, \{y,y'\})} \log P_{r_\theta}(y \succ y' \mid x)
      \\
 &\qquad\qquad\qquad\qquad\qquad\qquad\qquad
      + \frac{P(x,y',y)}{Z\widetilde P(x, \{y,y'\})} \log P_{r_\theta}(y' \succ y \mid x)
    \Bigr) \label{eq:ce-decomposition-app}
\end{align}
Recall the definition of the CPRD,
\begin{equation}
    \omega_P(y \succ y' \mid x) = \frac{P(x,y,y')}{P(x,y',y) + P(x,y,y')} = \frac{P(x,y,y')}{Z\widetilde P(x, \{y,y'\} )}.
\end{equation}
In addition, we note that a cross entropy between two Bernoulli distributions $\operatorname{Bern}(p)$ and $\operatorname{Bern}(q)$ is given by
\begin{equation}
    \mathrm{CE}(\operatorname{Bern}(p), \operatorname{Bern}(q)) = -p \log q - (1-p) \log (1-q)
\end{equation}
which is exactly the term in the \eqref{eq:ce-decomposition-app}. To see that it is a Bernoulli distribution, we use the fact that $ P_{r_\theta}(y \succ y' \mid x) +  P_{r_\theta}(y' \succ y \mid x) = 1$ and $\omega_P(y \succ y' \mid x) + \omega_P(y' \succ y \mid x) = 1$. Therefore, we can write

\begin{align}   
        & -\frac{P(x,y,y')}{Z\widetilde P(x, \{y,y'\})} \log P_{r_\theta}(y \succ y' \mid x)
        - \frac{P(x,y',y)}{Z\widetilde P(x, \{y,y'\})} \log P_{r_\theta}(y' \succ y \mid x)
      \\
      &= \mathrm{CE}(\operatorname{Bern}(\omega_P(y \succ y' \mid x)), \operatorname{Bern}(P_{r_\theta}(y \succ y' \mid x))).
\end{align}
Substitute back, we have

\begin{align}
    \mathcal{L}_\text{BT}(\theta) &=  C +  Z\sum_{(x,y,y') \in \mathcal{I}}  \widetilde P(x, \{y,y'\}) \mathrm{CE}(\operatorname{Bern}(\omega_P(y \succ y' \mid x)), \operatorname{Bern}(P_{r_\theta}(y \succ y' \mid x))).
\end{align}

Since cross entropy decomposes as
\begin{equation}
    \mathrm{CE}(p,q) = H(p) + D_{\mathrm{KL}}(p \Vert q),
\end{equation}
where $H(p)$ is the entropy and $D_{\mathrm{KL}}(p \Vert q)$ is the KL divergence, we can write the objective as
\begin{align}
    \mathcal{L}_\text{BT}(\theta) &=  C + Z\sum_{(x,y,y') \in \mathcal{I}}  \widetilde P(x, \{y,y'\}) \Bigl( H(\operatorname{Bern}(\omega_P(y \succ y' \mid x))) \\
    &\qquad\qquad\qquad\qquad
    +D_{\mathrm{KL}}(\operatorname{Bern}(\omega_P(y \succ y' \mid x)) \Vert \operatorname{Bern}(P_{r_\theta}(y \succ y' \mid x))) \Bigr).
\end{align}

The entropy term is independent of $\theta$ and is a constant term that depends on the true CPRD $\omega_P$. Finally, we can write the objective as
\begin{equation}
\mathcal{L}_\text{BT}(\theta) = C + Z \mathbb{E}_{(x, \{y,y'\} ) \sim \widetilde P} \left[ D_{\mathrm{KL}}( \operatorname{Bern}(\omega_P(y \succ y' \mid x)) \Vert \operatorname{Bern}(P_{r_\theta}(y \succ y' \mid x))) \right].
\end{equation}

\end{proof}

\BTconsistency*
\begin{proof}
From Theorem \ref{thm:decomposition-discriminative-BT-objective}, we have
\begin{equation}
    \mathcal{L}_\text{BT}(\theta) = C + Z \mathbb{E}_{(x, \{y,y'\} ) \sim \widetilde P} \left[ D_{\mathrm{KL}}( \operatorname{Bern}(\omega_P(y \succ y' \mid x)) \Vert \operatorname{Bern}(P_{r_\theta}(y \succ y' \mid x))) \right].
\end{equation}
By the property of the KL divergence, the term
$D_{\mathrm{KL}}( \operatorname{Bern}(\omega_P(y \succ y' \mid x)) \Vert \operatorname{Bern}(P_{r_\theta}(y \succ y' \mid x)))$ is always non-negative and vanishes if and only if its arguments coincide. Since $\omega_P$ is representable by a BT model, we know that there exists $\theta^*$ such that $P_{r_\theta^*}(y \succ y' \mid x) = \omega_P(y \succ y' \mid x)$ which make the KL term vanish. As a result, $\mathcal{L}_\text{BT}(\theta^*) = C$ which is the global minimum due to the non-negativity of the KL divergence term.

For any other global minimizer $\hat{r}_\theta$, we can't have the objective value smaller than this global minimum, we must have $\mathcal{L}_\text{BT}(\hat{r}_\theta) = C$ and therefore it must be the case that
\begin{equation}
    P_{\hat{r}_\theta}(y \succ y' \mid x) = \omega_P(y \succ y' \mid x)
\end{equation}
for $\widetilde P$-almost every unordered comparison $(x,\{y,y'\})$. 
\end{proof}

\BTconsistencyCI*
\begin{proof}
    Recall from Theorem \ref{thm:CPRD-BT-and-independence}, if the distribution $P$ satisfies positive--negative conditional independence, then its CPRD is a representable by a BT model. Therefore, the global minimizer of the discriminative BT learning objective \eqref{eq:discriminative-BT-objective} would exactly recover the true CPRD almost surely.
\end{proof}

\section{Sample Complexity}
\label{app:sample-complexity}

\begin{theorem}[Estimation error bound]
Let $r^*$ be a target score function, $\cH$ be a hypothesis class which is bounded by some constant $B$. Let $P$ be a triplet distribution that is BT-consistent with respect to $r^*$ and let $S$ be a set of $n$ samples drawn from $P$. Let $\ropt \in \arg\min_{r \in \cH} L_\text{BT}(r)$ is the optimal score function in $\cH$ that minimize the population BT objective with respect to $P$. Let $\hat{r}$ be the empirical risk minimizer of the empirical BT learning objective on $S$ then there exists a constant $c_B, M_B > 0 $ such that with probability at least $1-\delta$ over $S$,
\begin{equation}
\mathbb{E}_{\Qp} [(\Delta_{\rhat} - \Delta_{r^*})^2] \lesssim \frac{1}{\lambdaconn} ((\frac{\epsilonmisspec}{c_B})^2 + \frac{1}{ c_B} (\hat{\mathfrak{R}}_S(\cHpair) + M_B\sqrt{\frac{\log(2/\delta)}{n}})) +  \bE_{\Qp}[(\Deltaropt - \Deltarstar)^2]
\end{equation}
$\lambdaconn$ is the connectivity degree, $\hat{\mathfrak{R}}_S(\cHpair)$ is the empirical Rademacher complexity of the class $\cHpair: = \{r(x,y) - r(x,y') : r \in \cH\}$ and $\epsilon_{\text{misspec}}^2 = \bE_{\wP} [P_{r^*}(y \succ y' \mid x) - P_{\ropt}(y \succ y' \mid x)^2]$ is the misspecification error.
\end{theorem}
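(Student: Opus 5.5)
We bound $\mathbb{E}_{\Qp}[(\Deltarhat-\Deltarstar)^2]$ by the triangle inequality, $(a+b)^2\le 2a^2+2b^2$ with $a=\Deltarhat-\Deltaropt$ and $b=\Deltaropt-\Deltarstar$. The second piece is exactly the last term of the bound. For the first piece we transfer from the test pair distribution to the comparison distribution using connectivity. Since $y,y'$ are i.i.d.\ from $Q_y(\cdot\mid x)$,
\begin{equation*}
\mathbb{E}_{\Qp}[(\Delta f-\Delta g)^2]=2\,\tVar_Q(f-g).
\end{equation*}
By the definition of $\lambdaconn$ with $f=\rhat$ and $g=\ropt$, both in $\cH$, this gives
\begin{equation*}
\mathbb{E}_{\Qp}[(\Deltarhat-\Deltaropt)^2]\le \frac{2}{\lambdaconn}\,\bE_{\wP}[(\Deltarhat-\Deltaropt)^2].
\end{equation*}
It therefore remains to control the $\wP$-weighted squared margin error between $\rhat$ and $\ropt$.

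For that we use the decomposition of Theorem~\ref{thm:decomposition-discriminative-BT-objective} together with strong convexity of the logistic loss on a bounded domain. Because $\cH$ is bounded by $B$, all margins lie in $[-2B,2B]$, where $\sigma'\ge c_B:=\sigma(2B)\sigma(-2B)$. Two consequences follow. First, the Bernoulli KL satisfies
\begin{equation*}
D_{\mathrm{KL}}\bigl(\operatorname{Bern}(\sigma(a))\,\Vert\,\operatorname{Bern}(\sigma(b))\bigr)\ge \tfrac{c_B}{2}(a-b)^2 .
\end{equation*}
Second, $|\sigma(a)-\sigma(b)|\ge c_B|a-b|$. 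Since $P$ is BT-consistent, $\omega_P=\sigma(\Deltarstar)$, so the excess population loss is
\begin{equation*}
\LBT(r)-\LBT(r^*)=Z\,\bE_{\wP}\bigl[D_{\mathrm{KL}}(\operatorname{Bern}(\sigma(\Deltarstar))\Vert\operatorname{Bern}(\sigma(\Delta r)))\bigr]\ge \tfrac{Zc_B}{2}\,\bE_{\wP}[(\Delta r-\Deltarstar)^2].
\end{equation*}
Applying this to $r=\rhat$ and combining with
\begin{equation*}
\bE_{\wP}[(\Deltaropt-\Deltarstar)^2]\le \epsilonmisspec^2/c_B^2,
\end{equation*}
which follows from the Lipschitz lower bound on $\sigma$, another triangle inequality gives
\begin{equation*}
\bE_{\wP}[(\Deltarhat-\Deltaropt)^2]\lesssim \frac{1}{c_B}\bigl(\LBT(\rhat)-\LBT(r^*)\bigr)+\frac{\epsilonmisspec^2}{c_B^2}.
\end{equation*}

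The main obstacle is that $\LBT(\rhat)-\LBT(r^*)$ still contains the approximation gap $\LBT(\ropt)-\LBT(r^*)$. Our first attempt is to write
\begin{equation*}
\LBT(\rhat)-\LBT(r^*)=\bigl[\LBT(\rhat)-\LBT(\ropt)\bigr]+\bigl[\LBT(\ropt)-\LBT(r^*)\bigr].
\end{equation*}
The gap term is a KL between Bernoullis with parameters bounded away from $0$ and $1$. It is therefore bounded by a $\chi^2$-type quantity, at most a constant (depending on $B$) times $\epsilonmisspec^2$, and is absorbed into the $(\epsilonmisspec/c_B)^2$ term. The first bracket is controlled by the standard ERM argument. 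We have $\LBT(\rhat)-\LBT(\ropt)\le 2\sup_{r\in\cH}|\LBThat(r)/n-\LBT(r)|$. The per-sample loss $-\log\sigma(\Delta r)$ is $1$-Lipschitz in $\Delta r$ and bounded by $M_B:=\log(1+e^{2B})$. Talagrand's contraction and McDiarmid's inequality, using the empirical Rademacher version, give with probability $1-\delta$
\begin{equation*}
\LBT(\rhat)-\LBT(\ropt)\lesssim \hat{\mathfrak{R}}_S(\cHpair)+M_B\sqrt{\log(2/\delta)/n}.
\end{equation*}
Chaining the three displays and dividing by $c_B$ yields the stated bound. In the realizable case $\ropt=r^*$, both misspecification terms vanish and this recovers Theorem~\ref{thm:estimation-error-bound-realizable}.
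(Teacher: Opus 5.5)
There is a genuine gap in your middle step. Your outer structure matches the paper's: the $2a^2+2b^2$ split, the two-sample variance identity combined with the definition of $\lambdaconn$ for $\rhat,\ropt\in\cH$, and the ERM/contraction step. The problem is that you center the curvature argument at $r^*$. Both of your ``consequences'', $D_{\mathrm{KL}}(\operatorname{Bern}(\sigma(a))\Vert\operatorname{Bern}(\sigma(b)))\ge\tfrac{c_B}{2}(a-b)^2$ and $|\sigma(a)-\sigma(b)|\ge c_B|a-b|$, need the whole segment between $a$ and $b$ to lie in $[-2B,2B]$. You apply them with $a=\Deltarstar$.

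Boundedness of $\cH$ says nothing about $r^*$. In the misspecified setting $r^*$ is not in $\cH$ and need not be bounded; BT-consistency only forces $\bE_{y\sim p_-(\cdot\mid x)}[e^{r^*(x,y)}]=1$. Suppose $|\Deltarstar|\ge K\gg B$ on the support of $\wP$:
\begin{itemize}
\item Then $\epsilonmisspec^2\le 1$ while $\bE_{\wP}[(\Deltaropt-\Deltarstar)^2]\ge (K-2B)^2$. So your inequality $\bE_{\wP}[(\Deltaropt-\Deltarstar)^2]\le\epsilonmisspec^2/c_B^2$ is false for large $K$.
\item Likewise $D_{\mathrm{KL}}(\operatorname{Bern}(\sigma(\Deltarstar))\Vert\operatorname{Bern}(\sigma(\Deltarhat)))$ stays below $\log(1+e^{2B})$ while $(\Deltarhat-\Deltarstar)^2$ is unbounded. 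So your quadratic lower bound on $\LBT(\rhat)-\LBT(r^*)$ fails as well.
\end{itemize}
Your $\chi^2$ bound on $\LBT(\ropt)-\LBT(r^*)$ is fine, since it only needs $\sigma(\Deltaropt)$ bounded away from $0$ and $1$.

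The missing idea is to never compare squared margins with $r^*$ directly. The paper applies $c_B$-strong convexity of $g(a)=-\omega_P\log\sigma(a)-(1-\omega_P)\log(1-\sigma(a))$ along the segment between $\Deltarhat$ and $\Deltaropt$, both of which lie in $[-2B,2B]$. Writing $A^2=\bE_{\wP}[(\Deltarhat-\Deltaropt)^2]$, this gives $\tfrac{c_B}{2}A^2\le\tfrac1Z(\LBT(\rhat)-\LBT(\ropt))+\bE_{\wP}[g'(\Deltaropt)(\Deltaropt-\Deltarhat)]$. The target enters only through $g'(\Deltaropt)=\sigma(\Deltaropt)-\sigma(\Deltarstar)$, whose $L^2(\wP)$ norm is exactly $\epsilonmisspec$ however large $r^*$ is. Cauchy--Schwarz and solving the quadratic inequality then yield $A^2\le 4(\epsilonmisspec/c_B)^2+\tfrac{4}{Zc_B}(\LBT(\rhat)-\LBT(\ropt))$.

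Your argument does go through, with the same dependence on $c_B$, if you add the hypothesis $\sup|r^*|\le B$. Without that hypothesis you could keep your structure by working in probability space: use $|\Deltarhat-\Deltaropt|\le c_B^{-1}|\sigma(\Deltarhat)-\sigma(\Deltaropt)|$, a triangle inequality through $\sigma(\Deltarstar)$, and Pinsker. That route costs extra factors of $1/c_B$ compared with the stated bound.
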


\begin{proof}
\textbf{Step 1: Transition from $\Deltarhat - \Deltarstar$ to $\Deltarhat - \Deltaropt$.}\\
Our goal is to bound $\mathbb{E}_{\Qp} [(\Delta_{\rhat} - \Delta_{r^*})^2]$. We know that the empirical risk minimizer $\hat{r}$ would generally converge to $\ropt$. Therefore, we first bound the difference $\Deltarhat - \Deltarstar$ into the difference $\Deltarhat - \Deltaropt$ and the difference $\Deltaropt - \Deltarstar$.
\begin{equation}
    \Deltarhat - \Deltarstar = (\Deltarhat - \Deltaropt) + (\Deltaropt - \Deltarstar)
\end{equation}
Taking the expectation over $\Qp$, and applying the Minkowski inequality,
\begin{equation}
    \bE_{\Qp}[(\Deltarhat - \Deltarstar)^2] \leq 2\bE_{\Qp}[(\Deltarhat - \Deltaropt)^2] + 2\bE_{\Qp}[(\Deltaropt - \Deltarstar)^2].
\end{equation}
The second term $\bE_{\Qp}[(\Deltaropt - \Deltarstar)^2]$ is the difference between the margin of the optimal score function in $\cH$ and the true margin. This is a constant term that depends on the hypothesis class and the true score function and is zero in the realizable case. The next step is to bound the first term in this equation.

\textbf{Step 2: Transition from the expectation over $\Qp$ to the expectation over $\wP$.}
Our accuracy is evaluated on the distribution $\Qp$ but the BT objective is evaluated on the distribution $\wP$. This part of the proof aim to connect them. Here we use the definition of the connectivity function and the two-sample variance identity.

Recall that we want to bound
\begin{align}
    \bE_{\Qp}[(\Deltarhat - \Deltaropt)^2] &= \bE_{x \sim Q_x} \bE_{y, y' \sim Q_y(y \mid x)} [(\Deltarhat (x,y,y') - \Deltaropt (x,y,y'))^2]\\
    &=  \bE_{x \sim Q_x} \bE_{y, y' \sim Q_y(y \mid x)} [(\rhat (x,y) - \rhat (x,y')) - (\ropt(x,y) - \ropt(x,y'))]^2\\
    &= \bE_{x \sim Q_x} \bE_{y, y' \sim Q_y(y \mid x)} [(\rhat - \ropt)(x,y) - (\rhat - \ropt)(x,y')]^2\\
    &= 2 \bE_{x \sim Q_x} \Var_{y \sim Q_y(y \mid x)} [(\rhat - \ropt)(x,y)]\\
    &= 2 \tVar_Q[(\rhat - \ropt)]
\end{align}
The second to last equality follows from the two-sample variance identity and the final equality follows from the definition of $\tVar_Q$.

By the definition of the connectivity,
\begin{equation}
    \lambdaconn = \inf_{f, g \in \cH} \frac{\bE_{\wP}[(\Delta f- \Delta g)^2]}{\tVar_Q[(f- g)]} \leq \frac{\bE_{\wP}[(\Deltarhat - \Deltaropt)^2]}{\tVar_Q[(\rhat - \ropt)]}
\end{equation}
Therefore,
\begin{equation}
    \bE_{\Qp}[(\Deltarhat - \Deltaropt)^2]  \leq 2\tVar_Q[(\rhat - \ropt)] \leq \frac{2}{\lambdaconn} \bE_{\wP}[(\Deltarhat - \Deltaropt)^2]
\end{equation}

\textbf{Step 3: Bound $\bE_{\wP}[(\Deltarhat - \Deltaropt)^2]$ in terms of the polulation risk $\LBT(\rhat) - \LBT(\ropt)$.}\\
To relate the difference between the margin $\Deltarhat - \Deltaropt$ to the difference in the population risk $\LBT(\rhat) - \LBT(\ropt)$, we will use the property of the BT objective and show that the loss is strongly convex. Recall from proof of Theorem \ref{thm:decomposition-discriminative-BT-objective}, the BT objective can be decomposed as
\begin{equation}
    \LBT(r) = C + Z \mathbb{E}_{(x, \{y,y'\} ) \sim \widetilde P} \left[ \operatorname{CE}(\operatorname{Bern}(\omega_P(y \succ y' \mid x)), \operatorname{Bern}(P_{r}(y \succ y' \mid x))) \right].
\end{equation}
For each $(x, \{y,y'\})$, we define  $g_{x, {y,y'}}: \mathbb{R} \to \mathbb{R}$,
\begin{align}
    g_{x, {y,y'}}(a) &= \operatorname{CE}(\operatorname{Bern}(\omega_P(y \succ y' \mid x)), \operatorname{Bern}(\sigma(a)))\\
    &= - \omega_P \log \sigma(a) - (1- \omega_P) \log (1- \sigma(a))
\end{align}
We write $\omega_P$ for $\omega_P(y \succ y' \mid x)$ for convenience. With this definition, we can write the BT objective as
\begin{equation}
    \LBT(r) = C + Z \mathbb{E}_{(x, \{y,y'\} ) \sim \widetilde P} \left[ g_{x, {y,y'}}(\Deltar(x,y,y')) \right].
\end{equation}
We will write $\Deltar$ for $\Deltar(x,y,y')$ for convenience from now. Next, we will show that $g_{x, {y,y'}}$ is strongly convex with respect to $a$. We can check this by taking the derivative.
Using the property that $\sigma'(a) = \sigma(a)(1- \sigma(a))$, we have the first derivative
\begin{equation}
    \label{eq:g-derivative}
    g_{x, {y,y'}}'(a) = \sigma(a) - \omega_P
\end{equation}
and the second derivative
\begin{equation}
    g_{x, {y,y'}}''(a) = \sigma(a)(1- \sigma(a))
\end{equation}
Since the score function $r$ is bounded by a constant $B$, for any $(x, \{y,y'\})$, we know that $|\Deltar| = |r(x,y) - r(x,y')| \leq 2B$. 
Now, for $a$ that is bounded above by a constant $2B$, we know that 
\begin{equation}
\sigma(a)(1- \sigma(a)) \geq \sigma(2B)(1- \sigma(2B)) := c_B > 0
\end{equation}
With this assumption, we can conclude that $g_{x, {y,y'}}$ is $c_B$ strongly convex. Recall that if $f$ is $c$ strongly convex then for any $x,y$, we have
\begin{equation}
    f(x) - f(y) \geq f'(y)(x-y) + \frac{c}{2}(x-y)^2.
\end{equation}

Apply this inequality on $g_{x, {y,y'}}$ with $\Deltarhat (x,y,y')$ and $\Deltaropt(x,y,y')$ and drop $(x, y, y')$ for convenience.

\begin{equation}
    g(\Deltarhat) - g(\Deltaropt) \geq g'(\Deltaropt)(\Deltarhat - \Deltaropt) + \frac{c_B}{2}(\Deltarhat - \Deltaropt)^2.
\end{equation}

Now, we take the expectation over the distribution $\wP$, we have

\begin{equation}
    \frac{1}{Z} (\LBT(\rhat) - \LBT(\ropt)) \geq \bE_{\wP} [g'(\Deltaropt)(\Deltarhat - \Deltaropt)] + \frac{c_B}{2} \bE_{\wP} [(\Deltarhat - \Deltaropt)^2] 
\end{equation}

Rearranging,
\begin{align}
    \frac{c_B}{2} \bE_{\wP} [(\Deltarhat - \Deltaropt)^2]  &\leq \frac{1}{Z} (\LBT(\rhat) - \LBT(\ropt)) - \bE_{\wP} [g'(\Deltaropt)(\Deltarhat - \Deltaropt)] \\
    &\leq \frac{1}{Z} (\LBT(\rhat) - \LBT(\ropt)) + |\bE_{\wP} [g'(\Deltaropt)(\Deltarhat - \Deltaropt)]|
\end{align}
We can bound the second term by using the Cauchy-Schwarz inequality,
\begin{equation}
    |\bE_{\wP} [g'(\Deltaropt)(\Deltarhat - \Deltaropt)]| \leq \sqrt{\bE_{\wP} [g'(\Deltaropt)^2]} \sqrt{\bE_{\wP} [(\Deltarhat - \Deltaropt)^2]}
\end{equation}
From our derivation, we know that $g'(a) = \omega_P - \sigma(a)$, so
\begin{equation}
    \bE_{\wP} [g'(\Deltaropt)^2] = \bE_{\wP}[\omega_P - \sigma(\Deltaropt)]^2
\end{equation}
Since $P$ is BT-consistent, we have
\begin{equation}
    \omega_P(y \succ y' \mid x) = P_{r^*}(y \succ y' \mid x) = \sigma(\Deltarstar(x,y,y'))
\end{equation}
Therefore, $\bE_{\wP} [g'(\Deltaropt)^2]$ is the same as a misspecification error $\epsilon_{\text{misspec}}^2$ of $\ropt$. 
\begin{equation}
    |\bE_{\wP} [g'(\Deltaropt)(\Deltarhat - \Deltaropt)]| \leq  \epsilonmisspec \sqrt{\bE_{\wP} [(\Deltarhat - \Deltaropt)^2]}
\end{equation}
Let $A = \sqrt{\bE_{\wP} [(\Deltarhat - \Deltaropt)^2]}$, our goal is to bound $A^2$. Substitute the result from the Cauchy-Schwarz inequality back, we have
\begin{equation}
    \frac{c_B}{2}A^2 \leq \frac{1}{Z} (\LBT(\rhat) - \LBT(\ropt)) + \epsilonmisspec A
\end{equation}
Solving this quadratic inequality, we have that 
\begin{equation}
A \leq \frac{\epsilonmisspec + \sqrt{\epsilonmisspec^2 +  \frac{2c_B}{Z} (\LBT(\rhat) - \LBT(\ropt))}}{c_B}
\end{equation}
With Minkowski inequality, we have that
\begin{equation}
    A^2 \leq 2(\frac{\epsilonmisspec}{c_B})^2 + 2((\frac{\epsilonmisspec}{c_B})^2 + \frac{2}{Z c_B} (\LBT(\rhat) - \LBT(\ropt)))
\end{equation}
Finally, we have
\begin{equation}
    \bE_{\wP} [(\Deltarhat - \Deltaropt)^2] \leq 4((\frac{\epsilonmisspec}{c_B})^2 + \frac{1}{Z c_B} (\LBT(\rhat) - \LBT(\ropt)))
\end{equation}

\textbf{Step 4: Use the uniform convergence result to bound the difference in the popuation risk $\LBT(\rhat) - \LBT(\ropt)$}\\
This is the final step to bound the estimation error. Let $\LBThat(r)$ be the empirical risk of the BT objective with $n$ samples drawn from $P$. We write the difference in the population risk as
\begin{align}
    \LBT(\rhat) - \LBT(\ropt) &= (\LBT(\rhat) - \LBThat(\rhat)) + (\LBThat(\rhat) - \LBThat(\ropt)) + (\LBThat(\ropt) - \LBT(\ropt))\\
    &\leq (\LBT(\rhat) - \LBThat(\rhat)) + (\LBThat(\ropt) - \LBT(\ropt))\\
    &\leq 2 \sup_{r \in \cH} |\LBT(r) - \LBThat(r)|
\end{align}
This holds since $\rhat$ is the empirical risk minimizer therefore $\LBThat(\rhat) \leq \LBThat(\ropt)$. We will bound the right hand side term using the standard Rademacher complexity result.  As a quick recap, an empirical Rademacher complexity of a class $\mathcal{F}$ on the sample $S = \{z_i\}_{i=1}^n$ is defined as
\begin{equation}
    \hat{\mathfrak{R}}_S(\mathcal{F}) = \mathbb{E}_{\sigma} \left[ \sup_{f \in \mathcal{F}} \frac{1}{n} \sum_{i=1}^n \sigma_i f(z_i) \right]
\end{equation}
where $\sigma_i$ are i.i.d. Rademacher random variables. The uniform convergence result states that if the class of loss function $\mathcal{G} := \{z \mapsto \ell(f,z): f \in \mathcal{H}\}$ is bounded by some constant $M$, then with probability at least $1-\delta$ over the sample $S$, we have
\begin{equation}
\sup_{f \in \mathcal{F}} | \bE [\ell(f,z)] - \frac{1}{n} \sum_{z_i \in S} \ell(f,z_i)| \leq 2 \hat{\mathfrak{R}}_S(\mathcal{G}) + 3M\sqrt{\frac{\log(2/\delta)}{n}}
\end{equation}

Let $\cHpair = \{r(x,y) - r(x,y') : r \in \cH\}$ be the class of pairwise score differences of score functions in $\cH$. In our case, the class of loss function is given by $\ell \circ \cHpair = \{ (x,y,y') \mapsto g_{x,y,y'}(r(x,y) - r(x,y')): r \in \cH\}$ since the BT objective is decomposed as 
\begin{equation}
\LBT(r) = C + Z \mathbb{E}_{(x, \{y,y'\} ) \sim \widetilde P} \left[ g_{x, {y,y'}}(r(x,y) - r(x,y')) \right]
\end{equation}

Next, we will show that the loss is bounded by some constant $M$. From the definition

\begin{align}
    g_{x,y,y'}(a) &= -\omega_P \log \sigma(a) - (1- \omega_P) \log (1- \sigma(a))\\ 
    &\leq -\log (\min\{\sigma(a), 1-\sigma(a)\})\\
    & \leq \log (1 + \exp(2B)):= M_B
\end{align}
The final step is from our assumption that $r(x,y) \leq B$ for all $x,y$ so that $|r(x,y) - r(x,y')| \leq 2B$. 

Therefore, from the uniform convergence from Rademacher complexity, with probability at least $1-\delta$, we have

\begin{equation}
    \sup_{r \in \cH} \frac{1}{|Z|} |\LBT(r) - \LBThat(r)| \leq 2 \hat{\mathfrak{R}}_S(\ell \circ \cHpair) + 3M_B\sqrt{\frac{\log(2/\delta)}{n}}
\end{equation}

In addition, we note that
\begin{equation}
    g_{x,y,y'}'(a) = \omega_P - \sigma(a) \leq 1
\end{equation}
which implies that $g_{x,y,y'}$ is 1-Lipshitz. As a result, the Rademacher complexity of our loss must follows 
\begin{equation}
    \hat{\mathfrak{R}}_S(\ell \circ \cHpair) \leq \hat{\mathfrak{R}}_S (\cHpair) \cdot 1
\end{equation}
Substitute this back, we have 
\begin{equation}
    \sup_{r \in \cH} \frac{1}{|Z|} |\LBT(r) - \LBThat(r)| \leq 2 \hat{\mathfrak{R}}_S(\cHpair) + 3M_B\sqrt{\frac{\log(2/\delta)}{n}}
\end{equation}

and 
\begin{equation}
    \frac{1}{|Z|}\LBT(\rhat) - \LBT(\ropt) \leq 4\hat{\mathfrak{R}}_S(\cHpair) + 6M_B\sqrt{\frac{\log(2/\delta)}{n}}
\end{equation}

\textbf{Step 5: Combine all the bounds together}\\
Here, we just combine everything together to get the final bound.
\begin{align}
    &\bE_{\Qp}[(\Deltarhat - \Deltarstar)^2]\\
     &\leq 2\bE_{\Qp}[(\Deltarhat - \Deltaropt)^2] + 2\bE_{\Qp}[(\Deltaropt - \Deltarstar)^2]\\
    &\leq  \frac{4}{\lambdaconn} \bE_{\wP}[(\Deltarhat - \Deltaropt)^2] + 2\bE_{\Qp}[(\Deltaropt - \Deltarstar)^2]\\
    &\leq \frac{4}{\lambdaconn} ( 4((\frac{\epsilonmisspec}{c_B})^2 + \frac{1}{Z c_B} (\LBT(\rhat) - \LBT(\ropt))))  + 2\bE_{\Qp}[(\Deltaropt - \Deltarstar)^2]\\
    &\leq \frac{4}{\lambdaconn} ( 4((\frac{\epsilonmisspec}{c_B})^2 + \frac{1}{ c_B} (4\hat{\mathfrak{R}}_S(\cHpair) + 6M_B\sqrt{\frac{\log(2/\delta)}{n}})))  + 2\bE_{\Qp}[(\Deltaropt - \Deltarstar)^2]\\
    &= \frac{16}{\lambdaconn} ((\frac{\epsilonmisspec}{c_B})^2 + \frac{1}{ c_B} (4\hat{\mathfrak{R}}_S(\cHpair) + 6M_B\sqrt{\frac{\log(2/\delta)}{n}})) + 2\bE_{\Qp}[(\Deltaropt - \Deltarstar)^2]\\
    &\lesssim \frac{1}{\lambdaconn} ((\frac{\epsilonmisspec}{c_B})^2 + \frac{1}{ c_B} (\hat{\mathfrak{R}}_S(\cHpair) + M_B\sqrt{\frac{\log(2/\delta)}{n}})) +  \bE_{\Qp}[(\Deltaropt - \Deltarstar)^2]
\end{align}

\end{proof}

\begin{corollary}[Estimation error bound, realizable]
Let $r^*$ be the target score function, $\cH$ be a hypothesis class such that $r^* \in \cH$ and $\cH$ is bounded by some constant $B$. Let $P$ be a triplet distribution that is BT-consistent with respect to $r^*$ and let $S$ be a set of $n$ samples drawn from $P$. Let $\hat{r}$ be the empirical risk minimizer of the BT learning objective on $S$ then there exists a constant $c_B, M_B > 0$ such that with probability at least $1 - \delta$ over $S$, 
    \begin{align}
        \mathbb{E}_{\Qp} [(\Delta_{r} - \Delta_{r^*})^2] \notag \lesssim \frac{1}{\lambdaconn} (\frac{1}{ c_B} (\hat{\mathfrak{R}}_S(\cHpair) + M_B\sqrt{\frac{\log(2/\delta)}{n}}))
        \end{align}
when $\hat{\mathfrak{R}}_S$ is an empirical Rademacher complexity of the class $\cHpair: = \{r(x,y) - r(x,y') : r \in \cH\}$ and $\lambdaconn$ is the connectivity degree of $P$ with respect to $\cH$ and $Q$. 
    \end{corollary}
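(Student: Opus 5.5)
This corollary is the realizable specialization of the general estimation error bound just proved, so I will invoke that theorem and show that its two extra terms, the misspecification term $(\epsilonmisspec/c_B)^2$ and the approximation term $\bE_{\Qp}[(\Deltaropt - \Deltarstar)^2]$, both vanish when $r^*\in\cH$. The remaining right-hand side is then exactly the claimed bound, with the same constants $c_B=\sigma(2B)(1-\sigma(2B))$ and $M_B=\log(1+e^{2B})$.

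\textbf{Step 1 (the misspecification term vanishes).} Since $P$ is BT-consistent with respect to $r^*$, Theorem~\ref{thm:CPRD-BT-and-independence} gives $\omega_P(y\succ y'\mid x)=\sigma(\Deltarstar)$. Because $r^*\in\cH$, Theorem~\ref{thm:decomposition-discriminative-BT-objective} shows that $r^*$ makes the KL term zero. Hence $r^*$ attains the global minimum $C$ of $\LBT$ over $\cH$.
\begin{itemize}
\item I take $\ropt$ to be any population minimizer. By the argument of Corollary~\ref{cor:consistency-BT-objective}, it satisfies $\sigma(\Deltaropt)=\sigma(\Deltarstar)$ for $\wP$-almost every comparison.
\item Therefore $\epsilonmisspec^2=\bE_{\wP}[(\sigma(\Deltarstar)-\sigma(\Deltaropt))^2]=0$.
\item Since $\sigma$ is injective, this also gives $\Deltaropt=\Deltarstar$ $\wP$-a.s.
\end{itemize}

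\textbf{Step 2 (the approximation term vanishes).} I need $\bE_{\Qp}[(\Deltaropt-\Deltarstar)^2]=0$. This is the main subtlety: agreement $\wP$-a.s. does not automatically transfer to $\Qp$.
\begin{itemize}
\item My first approach is to simply choose $\ropt:=r^*$. The general theorem only requires $\ropt$ to be some minimizer of the population objective over $\cH$, and Step 1 shows that $r^*$ is one. With this choice the approximation term is identically zero.
\item As a check that does not rely on the choice, I can use connectivity. Applying the definition of $\lambdaconn$ with $f=\ropt$ and $g=r^*$ gives $\tVar_Q(\ropt-r^*)\le \bE_{\wP}[(\Deltaropt-\Deltarstar)^2]/\lambdaconn=0$.
\item By the two-sample variance identity from Step 2 of the general proof, $\bE_{\Qp}[(\Deltaropt-\Deltarstar)^2]=2\tVar_Q(\ropt-r^*)=0$. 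This holds whenever $\lambdaconn>0$, which the bound implicitly assumes.
\end{itemize}

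\textbf{Step 3 (conclude).} Substituting both zeros into the general estimation error bound leaves
\[
\bE_{\Qp}[(\Deltarhat-\Deltarstar)^2]\lesssim \frac{1}{\lambdaconn}\Bigl(\frac{1}{c_B}\bigl(\hat{\mathfrak{R}}_S(\cHpair)+M_B\sqrt{\log(2/\delta)/n}\bigr)\Bigr)
\]
with probability at least $1-\delta$, which is the claimed statement.

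The only real obstacle is the one in Step 2: making sure that the identification of $\ropt$ with $r^*$ holds in the $\Qp$ sense rather than only $\wP$-almost surely. Either choosing $\ropt=r^*$ directly or invoking connectivity resolves it.
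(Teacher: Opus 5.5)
Your proposal is correct and follows the paper's proof. The paper likewise specializes the general estimation bound by noting that $r^*\in\cH$ lets one take $\ropt=r^*$, which makes both $\epsilonmisspec$ and $\bE_{\Qp}[(\Deltaropt-\Deltarstar)^2]$ vanish; your check that $r^*$ is a population minimizer, and your connectivity argument carrying $\wP$-a.s.\ agreement over to $\Qp$ when $\lambdaconn>0$, are valid justifications of what the paper asserts in one line.
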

\begin{proof}
In the realizable setting, we know that $r^* \in \cH$ so we have $\ropt = r^*$. As a result
$\epsilon_{\text{misspec}}^2 = 0$ and $\bE_{\Qp}[(\Deltaropt - \Deltar^*)^2] = 0$. Substituting these into the estimation error bound, we have the desired result.
\end{proof}

\begin{proposition}[Connectivity degree for tabular BT]
In classical setting of tabular BT, we only have one context $x$ and a finite number of responses $\mathcal{Y} = \{y_1, y_2, \dots, y_m\}$ where each response has an associated score $\theta_i \in \mathbb{R}$ for $i = 1, 2, \dots, m$. Our goal is to recover the $\theta$. Assuming that the evaluation distribution $Q$ is uniform over the response space and we restrict the hypothesis class to be $\cH = \{\theta: \theta \in \mathbb{R}^m\, \sum_{i=1}^m \theta_i = 0\}$ such that the mean of the score is zero for identifiability then the connectivity degree is given by
\begin{equation}
    \lambdaconn = m \lambda_2(L)
\end{equation}
is the Fiedler value of the Laplacian matrix $L$ of the comparison graph. This graph is defined as a graph with $m$ nodes where node $i$ represents the response $y_i$ and the edge weight between node $i$ and node $j$ is the probability that $y_i$ and $y_j$ are compared.
\end{proposition}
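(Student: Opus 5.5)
The plan is to reduce the connectivity degree to a Rayleigh quotient of the graph Laplacian and then apply the Courant--Fischer characterization of $\lambda_2(L)$. Since there is a single context, a score function is just a vector $\theta\in\mathbb{R}^m$, and for $f,g\in\cH$ the quantity $\Delta f-\Delta g$ evaluated at a pair $\{y_i,y_j\}$ equals $u_i-u_j$ with $u:=f-g$. Because $\cH$ is the linear subspace $\{\theta:\sum_i\theta_i=0\}$, the set of differences $\{f-g: f,g\in\cH\}$ is exactly that subspace, namely the vectors with $u\perp\mathbf{1}$. We must exclude $u=0$ (i.e.\ $f=g$), where the ratio is $0/0$; I will read the definition of $\lambdaconn$ as an infimum over pairs with $f\neq g$.

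The first step is to compute the numerator. Write $w_{ij}=\wP(\{y_i,y_j\})$ for the probability that $y_i$ and $y_j$ are compared; these are the edge weights of the comparison graph. Then
\[
\bE_{\wP}[(\Delta f-\Delta g)^2]=\sum_{i<j} w_{ij}(u_i-u_j)^2=u^\top L u,
\]
where $L=D-W$ is the weighted Laplacian. The sign and orientation of $\Delta$ do not matter here, since the expression is squared.

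The second step is to compute the denominator. For uniform $Q$, we have $\tVar_Q(u)=\frac1m\sum_i u_i^2-\big(\frac1m\sum_i u_i\big)^2=\frac1m\|u\|^2$, because $u\perp\mathbf{1}$. Putting the two steps together,
\[
\lambdaconn=\inf_{u\perp\mathbf{1},\,u\neq0}\; m\,\frac{u^\top L u}{\|u\|^2}.
\]
Since $L$ is symmetric positive semidefinite with $L\mathbf{1}=0$, the vector $\mathbf{1}$ is an eigenvector for the smallest eigenvalue $\lambda_1=0$. By Courant--Fischer, the minimum of the Rayleigh quotient over $\mathbf{1}^\perp$ is $\lambda_2(L)$, and it is attained at a Fiedler vector. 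This gives $\lambdaconn=m\lambda_2(L)$.

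I expect no real technical obstacle. The only care needed is in the conventions: the edge weights must be the unordered comparison probabilities $\wP$ (so each pair is counted once), the degenerate case $f=g$ must be excluded, and one should check that the difference set of $\cH$ is exactly $\mathbf{1}^\perp$, which holds because $\cH$ is a subspace. If the graph is disconnected, then $\lambda_2=0$ and the formula correctly gives $\lambdaconn=0$.
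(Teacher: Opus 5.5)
Your proposal is correct and follows the same route as the paper: it uses $\cH-\cH=\cH$ to reduce to $u\perp\mathbf{1}$, identifies the numerator as $u^\top L u$ and the denominator as $\frac{1}{m}\|u\|^2$, and concludes with the Rayleigh quotient (Courant--Fischer) to get $m\lambda_2(L)$. Your explicit exclusion of $f=g$ and your denominator $\frac{1}{m}\|u\|^2$ are slightly cleaner than the paper's write-up, which at that step mistakenly writes $\frac{1}{m}\theta^\top L\theta$ before using the correct $\frac{1}{m}\theta^\top\theta$.
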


\begin{proof}
From the definition of the connectivity degree, 
\begin{equation}
    \lambdaconn = \inf_{f, g \in \cH} \frac{\bE_{\wP}[(\Delta f- \Delta g)^2]}{\tVar_Q[(f- g)]}
\end{equation}
Our first observation is that $\cH - \cH = \{\theta - \theta' : \theta, \theta' \in \cH\} = \cH$. This holds since our $\theta$ is any real value vector with the mean of zero, so the difference between any two score functions would still belong to the same class. As a result, we can write the connectivity as
\begin{equation}
    \lambdaconn = \inf_{\theta \in \mathbb{R}^m, \theta^\top \mathbf{1} = 0} \frac{\bE_{\wP}[\Delta \theta^2]}{\tVar_Q[\theta]}
\end{equation} 

Now, let's unpack each term at a time. For the numerator, we have
\begin{equation}
    \bE_{\wP}[\Delta \theta^2] = \bE_{{y_i, y_j} \sim \wP} [(\theta_i - \theta_j)^2] = \theta ^\top L \theta
\end{equation}
when $L$ is the Laplacian matrix of the comparison graph where the edge weight between node $i$ and node $j$ is given by $\wP_{ij}$ the probability for which $y_i$ and $y_j$ are compared.

For the second term, we have
\begin{equation}
    \tVar_Q[\theta] = \bE_{x \sim Q_x} \Var_{y \sim Q_y(y \mid x)} [\theta]
\end{equation}
Since, there is only one $x$, we are left with just the variance of the score function.
\begin{align}
    \tVar_Q[\theta]  = \Var_{y \sim Q_y}[\theta] = \bE_{Q_y}[\theta^2] - (\bE_{Q_y}[\theta])^2
\end{align}
From our assumption, $Q$ is a uniform distribution over the response space, and $\theta$ has mean of zero, we can conclude that $\bE_{Q_y}[\theta] = \frac{1}{m} \sum_{i=1}^m \theta_i = 0$ and
\begin{equation}
    \tVar_Q[\theta] = \bE_{Q_y}[\theta^2] = \frac{1}{m} \theta ^\top L \theta
\end{equation}

Therefore, 
\begin{equation}
    \lambdaconn  = \inf_{\theta \in \mathbb{R}^m, \theta^\top \mathbf{1} = 0} \frac{\theta ^\top L \theta}{\frac{1}{m} \theta ^\top \theta} = m \lambda_2(L)
\end{equation}
where $\lambda_2(L)$ is the second smallest eigenvalue of the Laplacian matrix $L$. This holds from the Rayleigh quotient.
\end{proof} 

Our proposed connectivity degree recovers the Fiedler value which appears in the prior work that provide an estimation bound for this classical tabular BT setting \cite{shah2016estimation}.

\begin{proposition}[Connectivity degree for linear BT]
In the setting of linear BT, we have a hypothesis class of $\cH = \{r : r(x,y) = w^\top \phi(x,y) , w \in \mathbb{R}^d\}$ where $\phi(x,y)$ is a feature map. Let $Q$ be the evaluation distribution and assume that the feature map is centered around $Q$, that is for any context $x$, we have $\bE_{y \sim Q_y(y \mid x)} [\phi(x,y)] = 0$, then the connectivity degree is given by
\begin{equation}
\lambdaconn = \lambda_{\min}(\Sigma_Q^{-1/2} \Sigma_P \Sigma_Q^{-1/2})
\end{equation}
is the smallest eigenvalue of the matrix $\Sigma_Q^{-1/2} \Sigma_P \Sigma_Q^{-1/2}$ where $\Sigma_P \in \mathbb{R}^{d \times d}$ is defined as 
\begin{equation}
    \Sigma_P = \bE_{(x,{y,y'}) \sim \wP} [(\phi(x,y) - \phi(x,y')) (\phi(x,y) - \phi(x,y'))^\top]
\end{equation}
and the matrix $\Sigma_Q \in \mathbb{R}^{d \times d}$ is defined as 
\begin{equation}
    \Sigma_Q = \bE_{(x, y) \sim Q} [\phi(x,y) \phi(x,y)^\top]
\end{equation}
\end{proposition}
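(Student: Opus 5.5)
Following the tabular case, I would reduce the infimum over pairs $f,g\in\cH$ to an infimum over a single weight vector. Every $f,g \in \cH$ have the form $f = w_f^\top\phi$ and $g = w_g^\top\phi$, so $f-g = v^\top\phi$ with $v = w_f - w_g$. As $f,g$ range over $\cH$, $v$ ranges over all of $\mathbb{R}^d$, so $\cH-\cH=\cH$. Hence
\begin{equation*}
\lambdaconn = \inf_{v \in \mathbb{R}^d} \frac{\bE_{\wP}[(\Delta_{v^\top\phi})^2]}{\tVar_Q[v^\top\phi]},
\end{equation*}
where the infimum is over $v$ for which the denominator is nonzero.

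Next I would compute the numerator and denominator as quadratic forms. The numerator: $\Delta_{v^\top\phi}(x,y,y') = v^\top(\phi(x,y)-\phi(x,y'))$, so squaring and taking expectation under $\wP$ gives $v^\top \Sigma_P v$. Note that the integrand is symmetric in $y,y'$, so it is well-defined on unordered pairs. The denominator: for each $x$, $\Var_{y\sim Q_y(\cdot\mid x)}[v^\top\phi(x,y)] = \bE_y[(v^\top\phi)^2] - (v^\top \bE_y[\phi(x,y)])^2$. The centering assumption kills the second term. Averaging over $x\sim Q_x$ gives $\tVar_Q[v^\top\phi] = v^\top\Sigma_Q v$. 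Thus $\lambdaconn = \inf_v \frac{v^\top\Sigma_P v}{v^\top\Sigma_Q v}$, a generalized Rayleigh quotient.

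Finally, assuming $\Sigma_Q$ is positive definite, which is implicit since $\Sigma_Q^{-1/2}$ appears in the statement, I substitute $u = \Sigma_Q^{1/2} v$. This is a bijection of $\mathbb{R}^d\setminus\{0\}$, and the quotient becomes $\frac{u^\top \Sigma_Q^{-1/2}\Sigma_P\Sigma_Q^{-1/2} u}{u^\top u}$. The Rayleigh–Ritz theorem for the symmetric PSD matrix $\Sigma_Q^{-1/2}\Sigma_P\Sigma_Q^{-1/2}$ then gives its minimum as $\lambda_{\min}$, which proves the claim.

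The main obstacle is bookkeeping rather than depth. I need to make the invertibility of $\Sigma_Q$ explicit, or else restrict to $v$ outside $\ker\Sigma_Q$ and interpret the formula on the range. I also need to check that the identification $\cH-\cH=\cH$ and the use of centering are legitimate. If $\Sigma_Q$ were singular, I would note that directions in its kernel give a zero denominator and are excluded by convention, so the formula holds with a pseudo-inverse on the range of $\Sigma_Q$.
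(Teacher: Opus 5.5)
Your proposal is correct and follows essentially the same route as the paper: reduce to a single weight vector via $\cH-\cH=\cH$, write the numerator and denominator as $v^\top\Sigma_P v$ and $v^\top\Sigma_Q v$ (using the centering assumption to kill the squared-mean term), and conclude with the Rayleigh quotient, where your substitution $u=\Sigma_Q^{1/2}v$ and explicit invertibility assumption just spell out what the paper leaves implicit. One caution on your aside about singular $\Sigma_Q$: the pseudo-inverse formula on the range of $\Sigma_Q$ is not correct in general. Adding a component from $\ker\Sigma_Q$ leaves the denominator unchanged but can lower the numerator, so the infimum becomes a Schur-complement quantity rather than $\lambda_{\min}$ of the compressed matrix.
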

\begin{proof}
Similar to the proof of the connectivity degree for tabular BT, we first note that $\cH - \cH = \{r - r' : r, r' \in \cH\} = \cH$. This holds since our $r$ is any linear function of the feature map, so the difference between any two score functions would still be a linear function of the feature map. As a result, we can write the connectivity as

\begin{equation}
    \lambdaconn = \inf_{w\in \mathbb{R}^d} \frac{\bE_{\wP}[(\Delta w^\top \phi)^2]}{\tVar_Q[w^\top \phi ]}
\end{equation}

For the numerator, we have
\begin{align}
    \bE_{\wP}[(\Delta w^\top \phi)^2] &= \bE_{(x,{y,y'} \sim \wP)} [( w^\top \phi(x,y) - w^\top \phi(x,y'))^2] \\
    &= \bE_{(x,{y,y'} \sim \wP)} [(w^\top (\phi(x,y) - \phi(x,y')))^2] \\
    &= \bE_{(x,{y,y'} \sim \wP)} [(w^\top (\phi(x,y) - \phi(x,y')))(\phi(x,y) - \phi(x,y'))^\top w] \\
    &= w^\top \bE_{(x,{y,y'} \sim \wP)} [(\phi(x,y) - \phi(x,y')) (\phi(x,y) - \phi(x,y'))^\top] w \\ 
    &= w^\top \Sigma_P w
\end{align}

For the denominator, we have
\begin{align}
    \tVar_Q[w^\top \phi ] = \bE_{x \sim Q_x} \Var_{y \sim Q_y(y \mid x)} [w^\top \phi(x,y)]
\end{align}

\begin{align}
    \Var_{y \sim Q_y(y \mid x)} [w^\top \phi(x,y)] &= \bE_{y \sim Q_y(y \mid x)} [(w^\top \phi(x,y))^2] - (\bE_{y \sim Q_y(y \mid x)} [w^\top \phi(x,y)])^2 \\
    &= \bE_{y \sim Q_y(y \mid x)} [w^\top \phi(x,y) \phi(x,y)^\top w] - (w^\top \bE_{y \sim Q_y(y \mid x)} [\phi(x,y)] )^2 \\
    &= w^\top \bE_{y \sim Q_y(y \mid x)} [\phi(x,y) \phi(x,y)^\top] w - (w^\top \bE_{y \sim Q_y(y \mid x)} [\phi(x,y)] )^2 \\
    &= w^\top \bE_{y \sim Q_y(y \mid x)} [\phi(x,y) \phi(x,y)^\top] w
\end{align}
The last term is zero since $\bE_{y \sim Q_y(y \mid x)} [\phi(x,y)] = 0$ by our assumption. Therefore, we have
\begin{align}
    \tVar_Q[w^\top \phi ] &= \bE_{x \sim Q_x}w^\top \bE_{y \sim Q_y(y \mid x)} [\phi(x,y) \phi(x,y)^\top] w \\
    &= w^\top \bE_{x \sim Q_x} \bE_{y \sim Q_y(y \mid x)} [\phi(x,y) \phi(x,y)^\top] w \\
    &= w^\top \Sigma_Q w
\end{align}

Putting everything together, we have
\begin{equation}
    \lambdaconn = \inf_{w\in \mathbb{R}^d} \frac{w^\top \Sigma_P w}{w^\top \Sigma_Q w} = \lambda_{\min}(\Sigma_Q^{-1/2} \Sigma_P \Sigma_Q^{-1/2})
\end{equation}
where $\lambda_{\min}(\cdot)$ is the smallest eigenvalue of the matrix. This holds from the Rayleigh quotient.
\end{proof}

Our result compliments the result in \cite{zhu2023principled}, which provide a bound for recovering the parameter $w$ under $\lVert \cdot \rVert_{\Sigma_P}$. Our result shows how to transfer to the evaluation under the test distribution $Q$. This $\Sigma_P$ is also related to the Fisher information matrix that is used on active reward learning in \cite{shenactive}. The key difference is that our $\Sigma_P$ is independent of the current hypothesis $w$ and is only dependent on the comparison distribution $\wP$ and the hypothesis class $\cH$ while Fisher information in \cite{shenactive} is dependent on the current hypothesis $w$.

\Accboundrealizable*
\begin{proof}
We will prove this using Markov's inequality and some algebra. Recall from Theorem \ref{thm:accuracy-bound} that
\begin{equation}
\operatorname{Acc}_Q(\rhat) \geq \Pr_{\Qp} \left(
    \left|\Delta_{\rhat} - \Delta_{r^*}\right| \le  \left|\Delta_{r*}\right|\right)
\end{equation}
For any $k > 0$, the following hold

\begin{align}
    \Pr_{\Qp} \left(
    \left|\Delta_{\rhat} - \Delta_{r*}\right| \le  \left|\Delta_{r*}\right|\right) &\geq \Pr_{\Qp} (|\Deltarhat - \Deltarstar| < k , |\Deltarstar| \ge k) \\
    &\geq \Pr_{\Qp} (|\Deltarhat - \Deltarstar| < k ) + \Pr_{\Qp}(|\Deltarstar| \ge k) - 1\\
    &= \Pr_{\Qp} (|\Deltarstar| \ge k) - \Pr_{\Qp} (|\Deltarhat - \Deltarstar| \geq k)
\end{align}
This is true from the identity that $\Pr(A \cap B) \geq \Pr(A) + \Pr(B) - 1$. Now, from Markov's inequality, we know that
\begin{align}
    \Pr_{\Qp} (|\Deltarhat - \Deltarstar| \geq k) &= \Pr_{\Qp} (|\Deltarhat - \Deltarstar|^2 \geq k^2) \\
    &\leq \frac{\mathbb{E}_{\Qp} [(\Deltarhat - \Deltarstar)^2]}{k^2}\\
    &\leq \frac{c(\frac{1}{ c_B} (\hat{\mathfrak{R}}_S(\cHpair) + M_B\sqrt{\frac{\log(2/\delta)}{n}}))}{\lambdaconn k^2} \\
    &=: \frac{c \operatorname{Complexity}(\cH,\delta)}{k^2\lambdaconn}
\end{align}
when $c > 0$ is a constant. The final line follows from Theorem \ref{thm:estimation-error-bound-realizable} which holds with probability at least $1-\delta$.
Substituting this back, we have
\begin{equation}
         \Pr_{\Qp} \left(
        \left|\Delta_{r} - \Delta_{r*}\right| \le  \left|\Delta_{r*}\right|\right) \geq\Pr_{\Qp} (|\Deltarstar| \ge k) - \frac{c \operatorname{Complexity}(\cH, \delta)}{k^2\lambdaconn}
\end{equation}
Taking the supremum over all $k > 0$ for the right hand side and we have the desired result.
\end{proof}
\clearpage

\section{Additional Experiments}
\label{sec:additional-experiments}

\subsection{Accuracy on pairs with small margin}
From Figure \ref{fig:margin-effect-hard-30} and \ref{fig:margin-effect-hard-10}, we can see that the accuracy gap between $r^*_{\text{rank}}$ and $r^*$ is larger than the overall accuracy. It takes much less data to achieve the same accuracy for $r^*_{\text{rank}}$ than $r^*$.

\begin{figure}[h]
    \centering
    \begin{subfigure}[t]{0.48\linewidth}
        \centering
        \includegraphics[width=\linewidth]{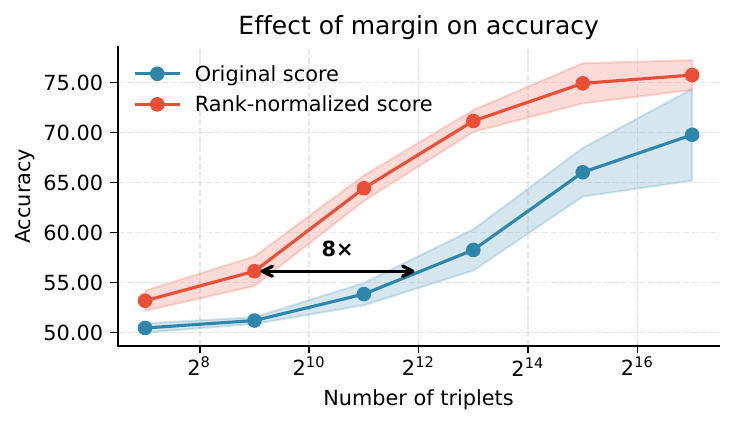}
        \caption{Bottom 30\% smallest true margin}
        \label{fig:margin-effect-hard-30}
    \end{subfigure}\hfill
    \begin{subfigure}[t]{0.48\linewidth}
        \centering
        \includegraphics[width=\linewidth]{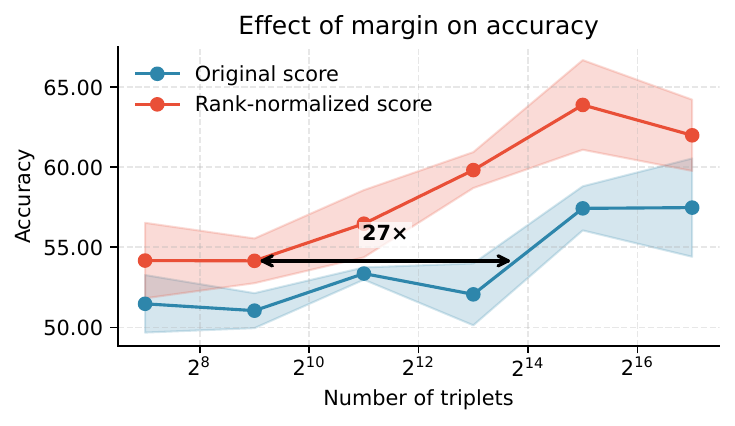}
        \caption{Bottom 10\% smallest true margin}
        \label{fig:margin-effect-hard-10}
    \end{subfigure}
    \caption{Effect of score margin on learning efficiency on pairs with the smallest true margin.}
    \label{fig:margin-effect-hard}
\end{figure}

\subsection{Connectivity Degree}
\label{app:connectivity-degree}
We provide more details on how we calculate the connectivity degree. The connectivity degree is defined as

\begin{equation}
    \lambdaconn(P, Q ; \cH) = \inf_{f, g \in \cH} \frac{\bE_{\wP}[(\Delta f- \Delta g)^2]}{\tVar_Q[(f- g)]}
\end{equation}

Since we are in the finite setting with $\mathcal{X} = \{x_1,\dots, x_m\}$, let $\tilde{P}_{ijk} = \wP(x_i, \{x_j, x_k\})$ for convenience, then the numerator is given by 

\begin{equation}
    \bE_{\wP}[(\Delta f- \Delta g)^2] = \sum_{i,j,k} \tilde{P}_{ijk} (\Delta f(x_i, x_j, x_k) - \Delta g(x_i, x_j, x_k))^2.
\end{equation}
Our test distribution $Q$ is a uniform distribution over the response space, so the variance is given by
\begin{equation}
    \tVar_Q[(f- g)] = \frac{1}{m}\sum_{i = 1}^m  [\frac{1}{m}\sum_{j = 1}^m (f(x_i, x_j) - g(x_i, x_j))^2 - (\frac{1}{m}\sum_{j=1}^m (f(x_i, x_j) - g(x_i, x_j)))^2].
\end{equation}
Since we have access to $\wP$ and all possible value of $x_i$, we can compute each term in the numerator and denominator for any function $f$ and $g$. To estimate the connectivity degree, we initialize the score function $f$ and $g$ in the form of cosine similarity between two hidden representations $h(x)$ and $h(y)$ parameterized by two-layer neural networks.
\begin{equation}
    f(x,y) = \operatorname{cosine-similarity}(h_f(x), h_f(y)),
\end{equation}
\begin{equation}
    g(x,y) = \operatorname{cosine-similarity}(h_g(x), h_g(y)),
\end{equation}
We then jointly train $f$ and $g$ to minimize the term in the connectivity degree definition, using gradient descent.

To optimize $p^-$ that maximizes the connectivity degree, we can turn this into a minimax problem. For any $p^-$, we can compute the corresponding $p^+$ that is BT-consistent and as a result can compute the corresponding comparison distribution $\wP$. Our optimization objective becomes

\begin{equation}
    \max_{p^-} \lambdaconn(P(p^-), Q ; \cH)
\end{equation}

which leads to the following optimization problem:
\begin{equation}
    \max_{p^-} \inf_{f, g \in \cH} \frac{\bE_{\wP(p^-)}[(\Delta f- \Delta g)^2]}{\tVar_Q[(f- g)]}
\end{equation}
What we can do is to initialize $p^-$ as a uniform distribution and $f,g$ in the similar form as before, then we can apply alternating gradient descent ascent  to update $p^-$ and $f,g$.

\paragraph{Stability of the estimated connectivity degree.}
We provide a stability analysis for the connectivity degree estimated by the
minimax optimization in Section~\ref{sec:exp-connectivity}. For each value of
$\alpha$, we repeated the estimation 10 times and aggregated the results across
5 random seeds. Table~\ref{tab:connectivity-stability} reports the average mean,
average standard deviation, and average relative standard deviation.

\begin{table}[t]
    \centering
    \small
    \setlength{\tabcolsep}{6pt}
    \caption{
    Stability of the estimated connectivity degree from the minimax optimization.
    For each value of $\alpha$, the estimation is repeated 10 times and aggregated
    across 5 random seeds.
    }
    \label{tab:connectivity-stability}
    \begin{tabular}{rrrr}
        \toprule
        $\alpha$ & Mean & Std. & Relative std. \\
        \midrule
        $-16.0$ & $0.0333$ & $0.0260$ & $95.9\%$ \\
        $ -8.0$ & $0.2159$ & $0.0461$ & $33.2\%$ \\
        $ -4.0$ & $0.9569$ & $0.0522$ & $ 7.3\%$ \\
        $ -2.0$ & $2.2040$ & $0.0292$ & $ 1.5\%$ \\
        $ -1.0$ & $3.3330$ & $0.0130$ & $ 0.4\%$ \\
        $ -0.5$ & $3.9672$ & $0.0003$ & $ 0.01\%$ \\
        $  0.0$ & $3.2801$ & $0.0075$ & $ 0.2\%$ \\
        $  0.5$ & $2.6066$ & $0.0191$ & $ 0.8\%$ \\
        $  1.0$ & $2.0599$ & $0.0296$ & $ 1.6\%$ \\
        $  2.0$ & $1.2605$ & $0.0267$ & $ 2.5\%$ \\
        $  4.0$ & $0.4888$ & $0.0386$ & $ 9.8\%$ \\
        $  8.0$ & $0.0850$ & $0.0086$ & $17.8\%$ \\
        $ 16.0$ & $0.0060$ & $0.0020$ & $44.1\%$ \\
        \bottomrule
    \end{tabular}
\end{table}

Overall, the estimate is fairly stable in the main regime of interest. For
$\alpha \in [-4,4]$, the relative standard deviation is at most about $10\%$,
and is below $3\%$ for most values in this range. For more extreme values of
$\alpha$, the relative variation becomes larger, but this occurs precisely when
the connectivity degree itself is already very small. In these cases, the
absolute standard deviation remains small; for example, it is $0.0260$ at
$\alpha=-16$ and $0.0020$ at $\alpha=16$. These results suggest that, although
the minimax estimate is stochastic, it is sufficiently stable to support the
trends reported in Section~\ref{sec:exp-connectivity}.

\subsection{Connectivity Degree General Setting}
\label{app:connectivity-degree-general}

When we have general test distribution $Q’$ over $\mathcal{X} \times \mathcal{Y} \times \mathcal{Y}$. The connectivity degree is generalized to
\begin{equation}
    \lambdaconn(P, Q’ ; \cH) = \inf_{f,g \in H} \frac{\bE_{\wP}[(\Delta f- \Delta g)^2]}{\bE_{Q’}[(\Delta f- \Delta g)^2]}
\end{equation}
This general quantity is a worst-case ratio between disagreement on the training and test distribution. When $Q’ = P$, this quantity is 1. Previously, we focus on the test distribution with the product form $Q’(x,y,y’) = Q_x(x)Q(y \mid x) Q(y’ \mid x)$. Under this choice, we evaluate over all independent pairs of responses drawn from $Q(\cdot \mid x)$ rather than only over some selective pairs. This gives the quantity a connectivity interpretation: the ratio asks whether the training distribution captures enough pairwise information to control disagreement over the full response space over independent pairs.

Across datasets, connectivity is larger when the training comparisons better cover the pairwise distinctions that matter under the target evaluation distribution. Across models, expanding the hypothesis class can decrease connectivity because the infimum is taken over a richer set of functions. This is precisely why connectivity is informative: it captures the interaction between the data distribution and the model class. In practice, this motivates estimating connectivity using a tractable hypothesis class, such as linear models, as a proxy for more complex classes: if connectivity is poor even for the linear class, it will be no larger for richer classes.

In the experiment in Section \ref{sec:real-world-experiments}, we compute the connectivity degree for a class of linear reward heads over the embeddings from \texttt{Llama-3.1-8B-Instruct}. For this linear setting, the connectivity degree is simply the smallest eigenvalue of the covariance matrix.

\begin{proposition}[Connectivity degree for linear BT with general test distribution]
    In the setting of linear BT, let
    \begin{equation}
        \cH = \{r : r(x,y) = w^\top \phi(x,y), \; w \in \mathbb{R}^d\},
    \end{equation}
    where $\phi(x,y)$ is a feature map. Let $P$ be the training comparison
    distribution and let $Q'$ be a general test distribution over
    $\mathcal X \times \mathcal Y \times \mathcal Y$. Define
    \begin{equation}
        \Sigma_P
        =
        \bE_{(x,y,y') \sim P}
        \left[
        (\phi(x,y)-\phi(x,y'))
        (\phi(x,y)-\phi(x,y'))^\top
        \right],
    \end{equation}
    and
    \begin{equation}
        \Sigma_{Q'}
        =
        \bE_{(x,y,y') \sim Q'}
        \left[
        (\phi(x,y)-\phi(x,y'))
        (\phi(x,y)-\phi(x,y'))^\top
        \right].
    \end{equation}
    Then the connectivity degree is given by
    \begin{equation}
        \lambdaconn(P,Q';\cH)
        =
        \lambda_{\min}
        \left(
        \Sigma_{Q'}^{-1/2}
        \Sigma_P
        \Sigma_{Q'}^{-1/2}
        \right),
    \end{equation}
    where $\lambda_{\min}(\cdot)$ denotes the smallest eigenvalue.
    \end{proposition}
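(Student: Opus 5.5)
The plan mirrors the proof of the linear BT proposition with centered features. The difference is that both numerator and denominator are now expectations of squared margin differences, so no centering assumption is needed.

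\textbf{Step 1: reduce to a single weight vector.} Since $\cH$ is a linear class, $\cH-\cH=\cH$: for $f=w_1^\top\phi$ and $g=w_2^\top\phi$ we have $f-g=(w_1-w_2)^\top\phi\in\cH$. The margin map is linear, so $\Delta f-\Delta g=\Delta(f-g)$. The infimum over pairs $(f,g)$ therefore becomes an infimum over a single $w\in\mathbb{R}^d$, where the ratio is well defined:
\begin{equation}
\lambdaconn(P,Q';\cH)=\inf_{w}\frac{\bE_{P}[(\Delta w^\top\phi)^2]}{\bE_{Q'}[(\Delta w^\top\phi)^2]}.
\end{equation}

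\textbf{Step 2: write both expectations as quadratic forms.} For any $(x,y,y')$ we have $\Delta w^\top\phi(x,y,y')=w^\top(\phi(x,y)-\phi(x,y'))$. Squaring and exchanging expectation with the quadratic form gives
\begin{equation}
\bE_P[(\Delta w^\top\phi)^2]=w^\top\Sigma_P w,\qquad \bE_{Q'}[(\Delta w^\top\phi)^2]=w^\top\Sigma_{Q'}w.
\end{equation}
Unlike the $\tVar_Q$ version, no centering assumption is needed because the denominator is already a second moment of differences. The squared difference is symmetric in $y$ and $y'$, so taking the expectation over $P$ or over $\wP$ gives the same matrix up to the normalization constant. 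I would note this to reconcile the statement with the definition of $\lambdaconn$.

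\textbf{Step 3: generalized Rayleigh quotient.} The infimum of $w^\top\Sigma_P w/w^\top\Sigma_{Q'}w$ is the smallest generalized eigenvalue. To see this, substitute $u=\Sigma_{Q'}^{1/2}w$. The quotient becomes $u^\top\Sigma_{Q'}^{-1/2}\Sigma_P\Sigma_{Q'}^{-1/2}u/u^\top u$, and the ordinary Rayleigh–Ritz theorem gives $\lambda_{\min}(\Sigma_{Q'}^{-1/2}\Sigma_P\Sigma_{Q'}^{-1/2})$.

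\textbf{Main obstacle.} The only delicate point is that Step 3 needs $\Sigma_{Q'}$ to be invertible. I would assume $\Sigma_{Q'}\succ0$, which is implicit in the statement since it writes $\Sigma_{Q'}^{-1/2}$. If $\Sigma_{Q'}$ is singular, I would restrict $w$ to the orthogonal complement of its null space, or equivalently use the pseudo-inverse. Directions in that null space have zero denominator and are excluded, because the ratio is only defined for $\Delta f\neq\Delta g$ on $Q'$.
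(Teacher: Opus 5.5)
Your proof is correct and follows the paper's argument essentially step for step: you reduce to a single $w$ via $\cH-\cH=\cH$, identify the numerator and denominator as $w^\top\Sigma_P w$ and $w^\top\Sigma_{Q'}w$, and finish with the generalized Rayleigh quotient. Your remark that the symmetric integrand makes $\bE_P$ and $\bE_{\wP}$ agree up to the normalizing constant is a reconciliation the paper's proof silently skips.

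One caution concerns your fallback for singular $\Sigma_{Q'}$, a case the statement excludes anyway since it presupposes $\Sigma_{Q'}\succ 0$. Restricting $w$ to the range of $\Sigma_{Q'}$, or using a pseudo-inverse, does not compute the infimum. Adding a null-space component to $w$ leaves the denominator unchanged but can still lower the numerator, so the correct value involves a Schur complement of $\Sigma_P$ rather than $\Sigma_P$ itself.
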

    
    \begin{proof}
    First note that
    \begin{equation}
        \cH - \cH
        =
        \{r-r' : r,r' \in \cH\}
        =
        \cH,
    \end{equation}
    since the difference between two linear reward functions is still linear in
    $\phi$. Therefore, it suffices to consider functions of the form
    $r_w(x,y)=w^\top \phi(x,y)$.
    
    By definition of connectivity under a general test comparison distribution,
    \begin{equation}
        \lambdaconn(P,Q';\cH)
        =
        \inf_{w \in \mathbb{R}^d}
        \frac{
        \bE_{(x,y,y')\sim P}
        \left[
        \left(
        w^\top\phi(x,y)-w^\top\phi(x,y')
        \right)^2
        \right]
        }{
        \bE_{(x,y,y')\sim Q'}
        \left[
        \left(
        w^\top\phi(x,y)-w^\top\phi(x,y')
        \right)^2
        \right]
        }.
    \end{equation}
    For the numerator, we have
    \begin{equation}
    \begin{aligned}
        &\bE_{(x,y,y')\sim P}
        \left[
        \left(
        w^\top\phi(x,y)-w^\top\phi(x,y')
        \right)^2
        \right] \\
        &=
        \bE_{(x,y,y')\sim P}
        \left[
        \left(
        w^\top(\phi(x,y)-\phi(x,y'))
        \right)^2
        \right] \\
        &=
        w^\top \Sigma_P w .
    \end{aligned}
    \end{equation}
    Similarly, for the denominator,
    \begin{equation}
    \begin{aligned}
        &\bE_{(x,y,y')\sim Q'}
        \left[
        \left(
        w^\top\phi(x,y)-w^\top\phi(x,y')
        \right)^2
        \right] \\
        &=
        \bE_{(x,y,y')\sim Q'}
        \left[
        \left(
        w^\top(\phi(x,y)-\phi(x,y'))
        \right)^2
        \right] \\
        &=
        w^\top \Sigma_{Q'} w .
    \end{aligned}
    \end{equation}
    Thus,
    \begin{equation}
        \lambdaconn(P,Q';\cH)
        =
        \inf_{w\neq 0}
        \frac{w^\top \Sigma_P w}{w^\top \Sigma_{Q'} w}.
    \end{equation}
    By the Rayleigh quotient for generalized eigenvalues,
    \begin{equation}
        \inf_{w\neq 0}
        \frac{w^\top \Sigma_P w}{w^\top \Sigma_{Q'} w}
        =
        \lambda_{\min}
        \left(
        \Sigma_{Q'}^{-1/2}
        \Sigma_P
        \Sigma_{Q'}^{-1/2}
        \right).
    \end{equation}
    This proves the claim.
    \end{proof}

\subsection{Synthetic non-BT CPRD experiment}
\label{app:nonbt-cprd-experiment}

We construct a synthetic non-BT CPRD whose BT projection has the same target
score $r^*$ as the original BT-consistent distribution. Let $P_{\mathrm{BT}}$
denote the original BT-consistent triplet distribution. For each unordered pair
$\{y_i,y_j\}$, define its unordered comparison mass
\begin{equation}
M_{\mathrm{BT}}(x,i,j)
:=
P_{\mathrm{BT}}(x,y_i,y_j)
+
P_{\mathrm{BT}}(x,y_j,y_i).
\end{equation}
The induced comparison distribution $\widetilde P$ is proportional to this
mass:
\begin{equation}
\widetilde P(x,\{y_i,y_j\})
=
\frac{M_{\mathrm{BT}}(x,i,j)}{Z},
\end{equation}
where $Z$ is the normalizing constant from
Theorem~\ref{thm:decomposition-discriminative-BT-objective}. The CPRD of
$P_{\mathrm{BT}}$ is
\begin{equation}
\omega_{P_{\mathrm{BT}}}(y_i \succ y_j \mid x)
=
\sigma\!\left(r^*(x,y_i)-r^*(x,y_j)\right).
\end{equation}

\paragraph{Perturbing the oriented triplet mass.}
We construct $P_{\mathrm{nonBT}}$ by perturbing the two orientations of each
unordered pair in opposite directions. Let $g_x(i,j)$ be an antisymmetric
perturbation,
\begin{equation}
g_x(i,j)=-g_x(j,i),
\end{equation}
supported only on pairs with $M_{\mathrm{BT}}(x,i,j)>0$. We define
\begin{equation}
P_{\mathrm{nonBT}}(x,y_i,y_j)
=
P_{\mathrm{BT}}(x,y_i,y_j)
+
\varepsilon g_x(i,j),
\end{equation}
and equivalently
\begin{equation}
P_{\mathrm{nonBT}}(x,y_j,y_i)
=
P_{\mathrm{BT}}(x,y_j,y_i)
-
\varepsilon g_x(i,j).
\end{equation}
Thus the unordered comparison mass is unchanged:
\begin{align}
 P_{\mathrm{nonBT}}(x,y_i,y_j)
+
P_{\mathrm{nonBT}}(x,y_j,y_i) =
P_{\mathrm{BT}}(x,y_i,y_j)
+
P_{\mathrm{BT}}(x,y_j,y_i)
=
M_{\mathrm{BT}}(x,i,j).
\end{align}
Therefore $P_{\mathrm{BT}}$ and $P_{\mathrm{nonBT}}$ induce the same comparison
distribution $\widetilde P$. Only the directional split within each unordered
pair changes. By the definition of the CPRD,
\begin{equation}
\omega_P(y_i\succ y_j\mid x)
=
\frac{P(x,y_i,y_j)}
{P(x,y_i,y_j)+P(x,y_j,y_i)}.
\end{equation}
Therefore,
\begin{align}
\omega_{P_{\mathrm{nonBT}}}(y_i\succ y_j\mid x)
&=
\frac{
P_{\mathrm{BT}}(x,y_i,y_j)+\varepsilon g_x(i,j)
}{
M_{\mathrm{BT}}(x,i,j)
} \\
&=
\omega_{P_{\mathrm{BT}}}(y_i\succ y_j\mid x)
+
\varepsilon h_x(i,j),
\end{align}
where
\begin{equation}
h_x(i,j)
:=
\frac{g_x(i,j)}{M_{\mathrm{BT}}(x,i,j)}.
\end{equation}

\paragraph{Condition for preserving the BT projection.}
By Theorem~\ref{thm:decomposition-discriminative-BT-objective}, the population
BT objective can be written, up to constants independent of $r$, as a
cross-entropy projection of the CPRD onto the BT family under the comparison
distribution $\widetilde P$. Thus, for $P_{\mathrm{nonBT}}$, the relevant
objective is
\begin{equation}
\mathcal L_{\mathrm{BT}}^{\mathrm{nonBT}}(r)
=
\sum_{x,i<j}
\widetilde P(x,\{y_i,y_j\})
\,
\mathrm{CE}\!\left(
\operatorname{Bern}(\omega_{P_{\mathrm{nonBT}}}(y_i\succ y_j\mid x)),
\operatorname{Bern}(\sigma(r(x,y_i)-r(x,y_j)))
\right).
\end{equation}

We now derive the condition under which $r^*$ remains the BT projection target.
Fix $x$ and write $r_i=r(x,y_i)$,
$\omega_{ij}=\omega_{P_{\mathrm{nonBT}}}(y_i\succ y_j\mid x)$, and
$\Delta_{ij}=r_i-r_j$. For one unordered pair $\{y_i,y_j\}$, the Bernoulli
cross-entropy term is
\begin{equation}
\ell_{ij}(r)
=
-\omega_{ij}\log \sigma(\Delta_{ij})
-
(1-\omega_{ij})\log \sigma(-\Delta_{ij}).
\end{equation}
From equation \eqref{eq:g-derivative}, we have
\begin{align}
\frac{\partial \ell_{ij}(r)}{\partial r_i}
&=
-\omega_{ij}(1-\sigma(\Delta_{ij}))
+
(1-\omega_{ij})\sigma(\Delta_{ij}) \\
&=
\sigma(\Delta_{ij})-\omega_{ij}.
\end{align}
Applying this to all pairs involving $y_i$ gives
\begin{equation}
\frac{\partial \mathcal L_{\mathrm{BT}}^{\mathrm{nonBT}}(r)}
{\partial r(x,y_i)}
=
\sum_{j\neq i}
\widetilde P(x,\{y_i,y_j\})
\left[
\sigma(r_i-r_j)
-
\omega_{P_{\mathrm{nonBT}}}(y_i\succ y_j\mid x)
\right].
\label{eq:bt-gradient-coordinate-nonbt}
\end{equation}

Now evaluate the gradient at $r=r^*$. Since $P_{\mathrm{BT}}$ is BT-consistent,
\begin{equation}
\omega_{P_{\mathrm{BT}}}(y_i\succ y_j\mid x)
=
\sigma(r_i^*-r_j^*).
\end{equation}
Using the induced CPRD perturbation,
\begin{equation}
\omega_{P_{\mathrm{nonBT}}}(y_i\succ y_j\mid x)
=
\omega_{P_{\mathrm{BT}}}(y_i\succ y_j\mid x)
+
\varepsilon h_x(i,j),
\end{equation}
we obtain
\begin{align}
\frac{\partial \mathcal L_{\mathrm{BT}}^{\mathrm{nonBT}}(r^*)}
{\partial r(x,y_i)}
&=
\sum_{j\neq i}
\widetilde P(x,\{y_i,y_j\})
\left[
\sigma(r_i^*-r_j^*)
-
\omega_{P_{\mathrm{nonBT}}}(y_i\succ y_j\mid x)
\right] \\
&=
\sum_{j\neq i}
\widetilde P(x,\{y_i,y_j\})
\left[
\omega_{P_{\mathrm{BT}}}(y_i\succ y_j\mid x)
-
\omega_{P_{\mathrm{BT}}}(y_i\succ y_j\mid x)
-
\varepsilon h_x(i,j)
\right] \\
&=
-\varepsilon
\sum_{j\neq i}
\widetilde P(x,\{y_i,y_j\}) h_x(i,j).
\end{align}
Since
\begin{equation}
\widetilde P(x,\{y_i,y_j\}) h_x(i,j)
=
\frac{M_{\mathrm{BT}}(x,i,j)}{Z}
\cdot
\frac{g_x(i,j)}{M_{\mathrm{BT}}(x,i,j)}
=
\frac{g_x(i,j)}{Z},
\end{equation}
the gradient shift is
\begin{equation}
\frac{\partial \mathcal L_{\mathrm{BT}}^{\mathrm{nonBT}}(r^*)}
{\partial r(x,y_i)}
=
-\frac{\varepsilon}{Z}
\sum_{j\neq i} g_x(i,j).
\end{equation}
Therefore, a sufficient condition for preserving the BT projection target is
the zero-flow condition
\begin{equation}
\sum_{j\neq i} g_x(i,j)=0
\qquad
\text{for all } x,i.
\label{eq:nonbt-zero-flow-condition}
\end{equation}
Equivalently,
\begin{equation}
\sum_{j\neq i}
\widetilde P(x,\{y_i,y_j\})h_x(i,j)
=
0
\qquad
\text{for all } x,i.
\end{equation}
Under this condition, $r^*$ remains a stationary point of the BT projection
objective. Since the BT cross-entropy objective is convex in the tabular scores,
up to the usual additive shift fixed by the identifiability convention, $r^*$
remains the BT projection target for $P_{\mathrm{nonBT}}$.

\paragraph{Choice of the perturbation $g_x$.}
We choose $g_x$ to be a sum of signed circulations on directed $3$-cycles. For a
cycle $(a,b,c)$, define
\begin{equation}
g_x^{(a,b,c)}(i,j)
=
\mathbf{1}\{(i,j)\in\{(a,b),(b,c),(c,a)\}\}
-
\mathbf{1}\{(i,j)\in\{(b,a),(c,b),(a,c)\}\}.
\end{equation}
This perturbation is antisymmetric. It also has zero row sum:
\begin{equation}
\sum_{j\neq i} g_x^{(a,b,c)}(i,j)=0
\qquad
\text{for all } i.
\end{equation}
Because each cycle has zero row sum, their sum also satisfies
\begin{equation}
\sum_{j\neq i} g_x(i,j)=0
\qquad
\text{for all } x,i.
\end{equation}
Thus the perturbation preserves the BT projection target by
\eqref{eq:nonbt-zero-flow-condition}. In our implementation, we sample valid $3$-cycles from the support of the
comparison graph and randomly flip their orientations. We use at most $20$ cycles per context. Validation and test sizes are both $2048$, and final results are averaged over
five random seeds.


\end{document}